\documentclass[10pt,times]{elsarticle}

\usepackage[utf8]{inputenc} %
\usepackage[T1]{fontenc}    %

\usepackage{amsthm}

\usepackage{hyperref}       %

\usepackage{url}            %
\usepackage{booktabs}       %
\usepackage{amsfonts}       %
\usepackage{nicefrac}       %
\usepackage{microtype}      %
\usepackage[dvipsnames]{xcolor}         %
\usepackage{amsmath}
\usepackage{amssymb}
\usepackage{array}
\usepackage{amsthm}
\usepackage{svg}
\usepackage{bbm}
\usepackage{bm}
\usepackage{algorithm}
\usepackage{algpseudocode}
\usepackage{cleveref}
\usepackage[normalem]{ulem}
\usepackage{comment}
\usepackage{multirow}
\usepackage{placeins}

\usepackage{nkj}
\usepackage{enumitem}

\usepackage{caption}
\usepackage{subcaption}

\newcommand{\msgb}[1]{{\overleftarrow{\bfmu}^{(#1)}}}

\newcommand{\msgbel}[1]{\overleftarrow{\mu}^{(#1)}}

\newcommand{\Rel}{R}
\newcommand{\prelvec}[1]{ {\widetilde{\bfr}^{(#1)}}}

\newcommand{\numneu}[1]{{N^{(#1)}}}

\newcommand{\ol}{L}

\newcommand{\il}{0}

\newcommand{\subg}[1]{ {\mcS}^{(#1)}}

\newcommand{\fconst}[1]{{C^{(#1)}}}

\ifdefined\QED
\else
\newcommand{\QED}{\hfill \ensuremath{\Box}}
\fi

\makeatletter
\renewcommand\paragraph{\@startsection{paragraph}{4}{\z@}%
  {3.25ex \@plus1ex \@minus.2ex}%
  {-1em}%
  {\normalfont\normalsize\bfseries\unskip}}
\makeatother

\newtheorem{theorem}{Theorem}

\newtheorem{definition}{Definition}

\usepackage{accents}

\usepackage[most]{tcolorbox}

\newtcolorbox{commentbox}[2][]{%
  breakable,
  colback=yellow!2,        %
  colframe=gray!180!black,    %
  coltitle=black,           %
  fonttitle=\bfseries,      %
  title={#2},               %
  sharp corners,            %
  boxrule=1pt,              %
  left=6pt, right=6pt, top=4pt, bottom=4pt, %
  #1                        %
}

\usepackage{colortbl}

\begin{document}

\begin{frontmatter}

  \title{Normalized Relevance Measure as a Unifying Framework to Explain Neural Network Latent Structures}

 \author[1,2]{Ping Xiong}
\author[7,8,9]{Thomas Schnake}
\author[1,6]{Grégoire Montavon}
\author[1,2,4,5]{Klaus-Robert M\"uller}
\author[1,2,3]{Shinichi Nakajima\corref{cor}}
\ead{nakajima@tu-berlin.de}

\cortext[cor]{Corresponding author
}

\address[1]{Berlin Institute for the Foundations of Learning and Data -- BIFOLD, 10623 Berlin, Germany}
\address[2]{Machine Learning Group, Technical University of Berlin, 
Berlin, Germany}
\address[3]{RIKEN AIP, 
Tokyo, Japan}
\address[4]{Department of Artificial Intelligence, Korea University, Seoul,
Korea}
\address[5]{Max Planck Institute for Informatics, 
Saarbr\"ucken, Germany}
\address[6]{Charit\'e – Universit\"atsmedizin Berlin, %
Berlin, Germany}
\address[7]{Department of Chemistry, Chemical Physics Theory Group, University of Toronto, Toronto, 
Canada}
\address[8]{Vector Institute for Artificial Intelligence, Toronto, 
Canada}
\address[9]{Acceleration Consortium, University of Toronto, Toronto, 
Canada}

\begin{abstract}
To understand how a neural network (NN) functions and makes predictions, it has become increasingly clear that analyzing only the input domain is insufficient---one must also examine its internal inference mechanisms to capture the complete picture.   To explain the internal inference mechanisms of such models, it is essential to analyze the importance of latent representations for a given task.
In this paper, we propose the \emph{normalized relevance measure} (NRM) framework---a novel general explanation procedure that attributes relevance to \emph{arbitrary sets of neurons across layers of arbitrary architectures}.  In the NRM framework, relevance of selected neurons is explicitly defined as a normalized signed measure, constructed using simple operations---marginalization and conditioning based on additive and multiplicative laws---in analogy to the probability measures.  
The normalization property further guarantees comparability across layers.  The NRM framework subsumes existing propagation-based explanation algorithms by explicitly identifying the underlying quantity being computed.
We demonstrate the utility of the framework in computer vision applications, where joint relevance analysis across multiple layers reveals key information flows in VGG16 networks. Overall, the NRM framework provides a general, mathematically grounded approach to understanding how modern NNs propagate information, offering a versatile and broadly applicable foundation for explainable artificial intelligence.
\end{abstract}

\begin{keyword}
explainable artificial intelligence, layer-wise relevance propagation, normalized signed measure
\end{keyword}
\end{frontmatter}

\section{Introduction}
\label{sec:intro}

Deep neural networks (DNNs) are becoming increasingly popular due to their enormous learning capacity and outstanding performance in different downstream tasks.
However, these models are notoriously intransparent---it is often unclear why a DNN, even with a relatively simple architecture, makes particular predictions.
This lack of interpretability poses significant challenges, especially in high-stakes domains. To address this issue, the field of explainable artificial intelligence (XAI) has emerged  \citep{Gunning2019, DBLP:journals/pieee/SamekMLAM21, DBLP:conf/icml/HolzingerSMBS20, DBLP:series/lncs/11700}, aiming to enhance the transparency and interpretability of machine learning models.

A multitude of techniques have been proposed to explain the predictions of DNN and other ML models, often in terms of input features \cite{ancona2018towards,bach2015pixel,lund2017unified,lime} each of which having their distinct accuracy, applicability, and scalability profiles. Mathematical concepts, such as Taylor decomposition \citep{DBLP:journals/pr/MontavonLBSM17, schnake2020higher} and Shapley values \citep{Strumbelj2010,lund2017unified}, have been employed to establish theoretical foundations for XAI methods, and have been used to revisit explanation methods and extend them to new tasks or architectures \citep{DBLP:journals/pr/MontavonLBSM17, schnake2020higher, achtibat_attnlrp_2024,Bley2025}. %
\begin{figure*}[t]
    \begin{center}
        \includegraphics[width=\textwidth]{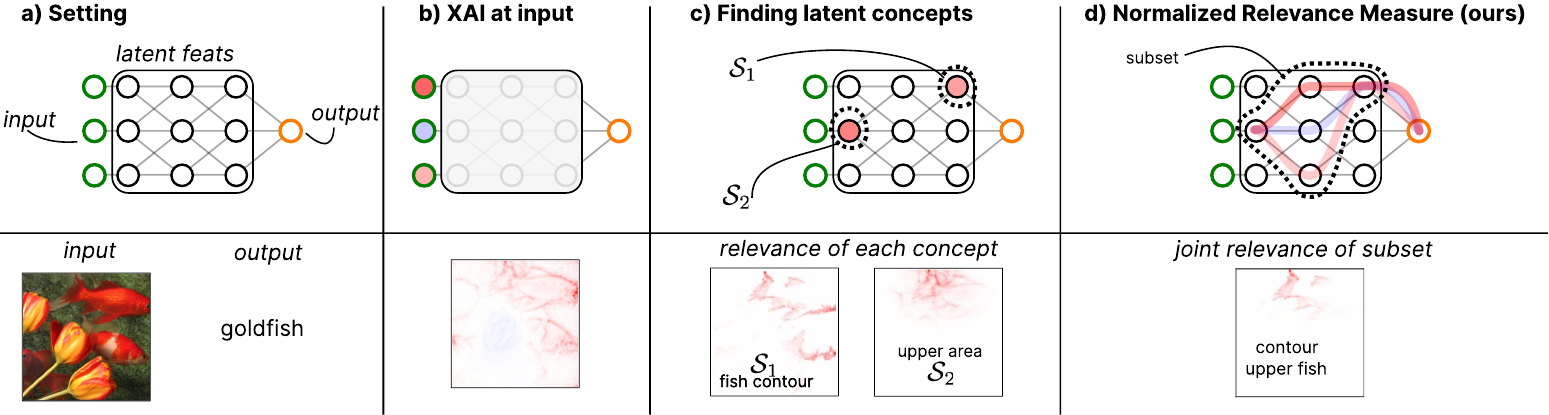}
    \end{center}
    \vspace{-5mm}
    \caption{
        Visualization of existing XAI methods and our NRM framework.
For given input and output of neural networks (a), conventional XAI methods typically attribute relevance feature-wise and only at the input layer (b). Some concept-based XAI methods attribute neurons in intermediate layers (c). Our proposed Normalized Relevance Measure (NRM) framework (d)
         offers an explicit and intuitive procedure for defining the normalized relevance of \emph{any set of neurons}, together with a systematic derivation of the corresponding LRP algorithms. Consequently, it enables us to 
    explain the flow of information through 
    neuron sets across layers, yielding more faithful, intuitive, and fine-grained explanations.
    }
    \label{fig:fig1_expl_concept}
\end{figure*}
A recent trend has been to explain prediction not only in terms of the input features, but also incorporating more abstract concepts found in the latent representation of the model (see, e.g., \cite{ConveptRelevancePropagation, DBLP:journals/tnn/KauffmannERMSM24}). To explain such models, 
 it is desirable to analyze the importance of latent representations and their interactions in order to elucidate internal inference mechanisms. 
Although several XAI approaches \citep{schnake2020higher, Chormai2024} have been proposed to analyze such internal mechanisms, no foundational framework yet exists that can quantify them in a flexible and consistent manner. 

In this paper, we propose a \emph{normalized relevance measure} (NRM) framework for attributing sets of neurons across different layers.  In this framework, the attribution quantity is defined as a signed measure, thereby satisfying basic rules analogous to the additive and multiplicative laws for probability measures.  This formulation allows us to easily define the relevance of any set of neurons through marginalization and conditioning operations.
Furthermore, the NRM framework can be applied to any type of network architecture by virtually transforming it into a restricted form of feed forward NN (FFNN).

The relevance of any set of neurons can be computed via layer-wise relevance propagation (LRP), which is systematically derived as sum-product message passing---a standard class of algorithms for probabilistic inference \citep{bishop2006patternchp8}. 
Extending the work in \citet{DBLP:conf/icml/XiongSMMN22}, we formalize the general relation between sets of neurons to be explained and the corresponding LRP algorithms.

The NRM framework thus provides a general procedure for deriving LRP algorithms that compute the normalized relevance of any 
set of neurons in any type of network architecture.
Although LRP algorithms for attributing intermediate neurons have been proposed \citep{ConveptRelevancePropagation}, they are defined as algorithmic procedures, i.e., how to propagate relevance, how to mask neurons, and when to stop the propagation. Consequently, it is not straightforward to define or search information flows that capture higher-order interactions \citep{schnake2020higher, DBLP:journals/pami/EberleBKMVM22}.
Furthermore, when neurons in different layers are attributed, there is no guarantee that the attributed quantities are comparable across layers.
In contrast, the attribution quantities in our NRM framework are directly defined, and their comparability is guaranteed by the normalization property. In Figure \ref{fig:fig1_expl_concept} we give a visual summary of our methodology, showing attribution for various sets of neurons.

NRM is a general framework, including 
any propagation-based method with conservation property.  We show that, under mild conditions, existing LRP algorithms can be cast within the NRM framework, thereby explicitly identifying the quantities they compute as normalized relevance measures, if irrelevant constant scale factors are ignored.
Furthermore, we discuss the possibility of extending the NRM framework %
 to occlusion- and gradient-based methods.  

To demonstrate the utility of our NRM framework, we perform higher-order analyses of information flow in VGG16. Specifically, we compute joint relevance across neurons in multiple layers, and identify interacting channels or neuron clusters  that focus on characteristic features of the target image class---patterns that may be overlooked by single-layer marginal analyses. Our quantitative evaluation provides strong evidence for the faithfulness of our approach relative to existing methods.
These results show that the NRM framework provides new tools for explaining NNs with fine-grained detail.

\medskip

The main contributions of this paper are summarized as follows:
\begin{itemize}
\item
We propose the NRM framework that allows us to systematically define normalized joint relevance of any set of neurons across layers, and to derive the corresponding LRP algorithm.

\item
We discuss how existing XAI methods can be cast within the NRM framework, providing a unified view of propagation-based methods.  We also discuss the possibility to extend the framework to gradient- and occlusion-based XAI methods.

\item
We apply the NRM framework to VGG16, and demonstrate that it can identify important higher-order information flow across layers. 

\end{itemize}

The paper is organized as follows.
After reviewing related work in \Cref{sec:RelatedWork}, 
we introduce the NRM framework in \Cref{sec:NormalizedRelevanceFramework}.
In \Cref{sec:ExperimentCV}, we demonstrate the utility of the NRM framework in qualitative and quantitative experiments, and \Cref{sec:Conclusion} concludes.

\section{Related Work}
\label{sec:RelatedWork}

Our work lies at the intersection of several subfields of Explainable AI, addressing the challenges of defining meaningful units of interpretability, analyzing interactions between or within these units, and producing robust, scalable explanations. Related work in each subfield is presented below.

\paragraph{Subgraph attribution} Several works focus on the problem of explanation specifically in graph neural networks and other structured ML models, identifying subgraphs as a natural interpretable unit for attributing predictions.  \citet{yuan2021explainability} proposes SubgraphX, a method based on Monte-Carlo tree search to find the most relevant subgraph in the input graph of a GNN.
\citet{ying2019gnnexplainer} proposes GNNExplainer, which synthesizes subgraphs that maximally align with the response of the GNN model. 
\citet{schnake2020higher} proposes GNN-LRP, an attribution to walks within a graph, which can be pooled into subgraphs. The approach is extended by \citet{DBLP:conf/icml/XiongSMMN22}, who proposed fast algorithms for subgraph attribution. In all of these works, the units of interpretability are subgraphs of the input graph. In contrast, our work focuses specifically on subgraphs of neurons within a feedforward architecture, with the goal of understanding the hierarchical decision-making processes of neural networks.

\paragraph{Explaining with interactions} Another category of works focuses on feature interactions, with the goal of understanding how they jointly contribute to the predicted output. In \citet{tsang2018detecting}, the authors propose to detect statistical interactions between input features by explaining a learned deep learning model. In \citet{Cui20, janizek2021integrated_hessian}, the authors recover pairwise interaction by analyzing higher-order derivatives of the network. \citet{Lundberg2020} propose an extension of Shapley values to recover similar pairwise feature interactions. \citet{DBLP:journals/pami/EberleBKMVM22} propose a robust approach based on LRP for explaining representation similarity in terms of pairs of input features. \citet{schnake2020higher} introduce \emph{walk} as a minimal explanation unit, capturing the interacting effect of nodes in an input graph or of neurons at each layer. \citet{DBLP:conf/icml/XiongSMMN22} point out that the relevance attributed to these walks has the same decomposability structure as the joint probability of a Markov chain, and derived a sum-product algorithm (cf.\ \citep{bishop2006patternchp8}) to calculate marginal relevance scores. \citet{SCHNAKE2025} proposed Symbolic XAI (SymbXAI), where walk relevance is used as a starting point to estimate the relevance of logical predicates over the input features. In our proposed NRM framework, a walk is the basic \emph{unit}, and the relevance of \emph{any set of neurons} in a deep NN is defined as a sum of walk relevances.

\paragraph{Layer-wise relevance propagation (LRP)}
\label{sec:RW.LRP}
Layer-wise Relevance Propagation (LRP) \citep{bach2015pixel} is a post-hoc, propagation-based XAI method for neural networks, where the implementation depends on the model architecture (i.e., it is model-specific). The method decomposes the model prediction into relevance scores of the input features, by a propagation scheme from the output layer to the input layer. 
Since LRP computation requires only a single backward propagation through the network, it is commonly used as a computationally efficient XAI tool. 
The explanation by LRP has been mathematically justified as the first-order Deep Taylor Decomposition (DTD) \citep{DBLP:journals/pr/MontavonLBSM17, DBLP:journals/pieee/SamekMLAM21}—namely, LRP approximates the non-linear 
forward process with layer-wise linear approximations.
LRP's advantageous computational properties enabled the analysis of datasets at scales and the systematic identification of flawed strategies in the model \citep{lapuschkin2019unmasking,kauffmann2025explainable}, so-called Clever-Hans effects.

\paragraph{Mechanistic interpretability} Another set of related works aim to understand the internal reasoning of neural network in more detail than through measuring the effect of input features. Methods such as  \cite{ConveptRelevancePropagation, Chormai2024, Zhou2018} propose to filter explanation through hidden neurons or latent subspaces, effectively capturing interaction between input features and latent. Several works focus on extending classical attribution methods to operate in latent representations \citep{dhamdhere2018how,gorbani2020neuron_shap,schnake2020higher}.  Another set of works \citep{olah2018the,elhage2021mathematical,Rai2024APR, NEURIPS2023_34e1dbe9} aims to uncover the causal, algorithm-like process that the model implements internally. NRM provides theoretical foundations for mechanistic interpretability, allowing us to define the relevance of any set of neurons across layers. Notably, the relevance quantity is \emph{explicitly defined} as a normalized signed measure, and thus is guaranteed to be \emph{comparable} across layers.  This comparability is essential when ranking the relevance of individual input features that interact with the network at different layers, as demonstrated in \Cref{sec:NormalizationProperty}.

\section{Normalized Relevance Measure Framework}
\label{sec:NormalizedRelevanceFramework}

\begin{figure}[t]
    \centering
    \includegraphics[width=\linewidth]{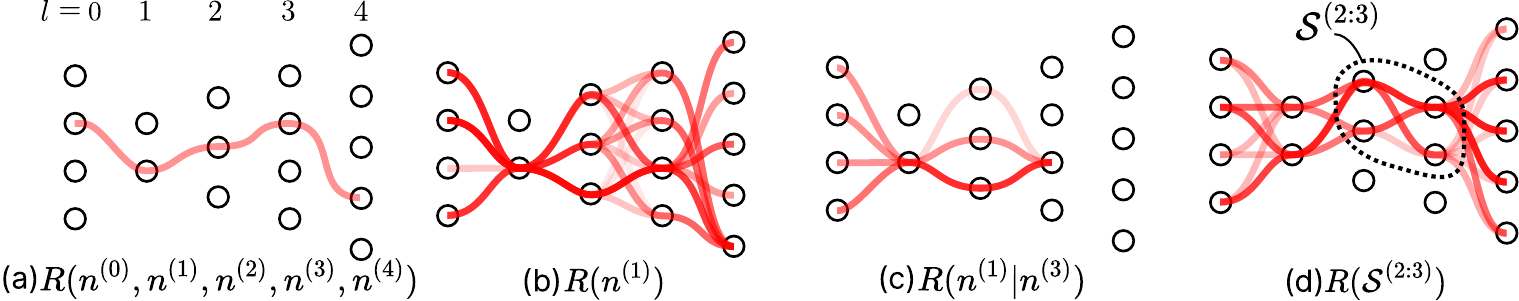}
    \caption{(a) Walk (full joint), (b) single neuron (marginal) , (c) conditional, and (d) substructure relevances. 
     The full joint relevance is the relevance of a single walk, and the marginal, conditional, and substructure relevances are the relevances of sets of walks and their ratios. 
    The variable $n^{(l)} \in \mathbb{N}^{(l)} = \{ 1, \ldots,  N^{(l)}\}$ specifies a single neuron at layer $l$.
    } 
    \label{fig:A.JointMarginalConditionalRelevance}
\end{figure}

In this section, we propose a novel Normalized Relevance Measure (NRM) framework.
We first provide motivation through a simple example in \Cref{sec:NRM.Motivation}, and then introduce the framework mathematically  
in Sections~\ref{sec:NormalizedSignedMeasure}--\ref{sec:MessagePassingAlgorithms}. 
We furthermore discuss its generality in \Cref{sec:GeneralityNRM} by showing how existing propagation-based algorithms can be cast within the NRM framework, by discussing an alternative relevance definition,
and by considering 
possible extensions to occlusion- and gradient-based XAI methods.
Lastly, we summarize the key features of NRM in \Cref{sec:KeyFeatures}.

\subsection{A simple example}
\label{sec:NRM.Motivation}

Consider a 4-layer MLP, shown in \Cref{fig:A.JointMarginalConditionalRelevance} (a).
For higher-order explanation, we aim to attribute relevance to an arbitrary set of neurons in a way that the relevance corresponds to the information flow passing through the selected neurons.

Let us first select one neuron per each layer, and denote its relevance as
\begin{align}    
   R(n^{(0)}, n^{(1)}, n^{(2)}, n^{(3)}, n^{(4)}).
   \label{eq:Motivation.Joint}
\end{align}
This notion of relevance, which measures the information flow passing through the neuron sequence $\bfn = (n^{(0)}, n^{(1)},\allowbreak  n^{(2)}, n^{(3)}, n^{(4)})$---a \emph{walk}---is defined in \citet{schnake2020higher}.

Then, let us remove neurons from the walk, and consider the relevance of a single neuron
$R(n^{(1)})$.
This relevance measures the information flow passing through $n^{(1)}$, which corresponds to many walks, as shown \Cref{fig:A.JointMarginalConditionalRelevance} (b).  
If the relevance is additive with respect to walks, $R(n^{(1)})$ can be computed by marginalizing out the layers in which no neuron is selected, i.e.,
\begin{align}
R(n^{(1)})
&=  \sum_{n^{(0)}, n^{(2)}, n^{(3)}, n^{(4)}}
R(n^{(0)}, n^{(1)}, n^{(2)}, n^{(3)}, n^{(4)}).
   \label{eq:Motivation.Marginal}
\end{align}
The relation between the walk relevance \eqref{eq:Motivation.Joint} and the marginalized relevance \eqref{eq:Motivation.Marginal} is exactly the same as the relation between the joint probability and the marginal probability.   Accordingly we call them joint relevance and marginal relevance, respectively.

We can similarly define the conditional relevance, for example, 
\begin{align}
R(n^{(1)} | n^{(3)})
&=  \textstyle 
\begin{cases}
        \frac{ R(n^{(1)}, n^{(3)} )}{R( n^{(3)})}& \mbox{ if } R(n^{(3)}) \neq 0, \\
        0 & \mbox{otherwise},
     \end{cases}
   \label{eq:Motivation.Conditional}
\end{align}
which measures the proportion of the information flow passing through both $n^{(1)}$ and $n^{(3)}$ among the information flow passing through $n^{(3)}$ (see \Cref{fig:A.JointMarginalConditionalRelevance} (c)).

Joint, marginal, and conditional relevance can also be defined for neuron sets such that  multiple neurons in the same layer can be selected.  For example, we can define
\begin{align}
& R(\mcS^{(2:3)}) \equiv R(n^{(2)} \in \mcS^{(2)}, n^{(3)} \in \mcS^{(3)})
= \sum_{\substack{n^{(0)}, n^{(1)}, \\n^{(4)}}}
\;\;\; \sum_{\substack{ n^{(2)} \in \mcS^{(2)}, \\ n^{(3)} \in \mcS^{(3)}}}
R(n^{(0)}, n^{(1)}, n^{(2)}, n^{(3)}, n^{(4)}),
   \label{eq:Motivation.Marginal.Set}
\end{align}
where $\mcS^{(2:3)} = (\mcS^{(2)}, \mcS^{(3)})$ are sets of neurons in layers 2 and 3, respectively, and the summation for the layers 0, 1, 4 is taken over all neurons in each layer, as in Eq.~\eqref{eq:Motivation.Marginal}.
This relevance quantifies the importance of information flow passing through the substructure $\mcS^{(2:3)}$, as visualized in \Cref{fig:A.JointMarginalConditionalRelevance} (d).

In our NRM framework, one can define the relevance of an arbitrary set of neurons flexibly by the marginalization and conditioning operations, as seen above.  Furthermore, once we define the target relevance quantity, it can be efficiently computed using sum-product message passing algorithms \citep{bishop2006patternchp8}. 
Overall, NRM provides a flexible and efficient tool for explaining interactions between neurons across layers.

The remainder of this section is devoted to a more general mathematical introduction to the NRM framework.

\subsection{Relevance as normalized signed measure}
\label{sec:NormalizedSignedMeasure}

We introduce the NRM framewrok for a restricted form of feed forward neural network (FFNNs)---called a \emph{proper} FFNNs---where all input features lie in the input layer, all output variables lie in the output layer, and no skip connection exists.  However, we can transform any complicated architecture into a proper FFNN, e.g., by unfolding recurrent NN, and copying intermediate input variables to the precedent layers, as detailed in \ref{sec:GeneralNNAdaptation}.  Therefore, it can be applied to NNs with arbitrary architecture.

Let us consider a proper $L$-layer FFNN to be a function $\bff: \mathbb{R}^{\numneu{\il}} \mapsto \mathbb{R}^{\numneu{\ol}}$ that maps from an $N^{(0)}$-dimensional input vector to a $N^{(L)}$-dimensional output vector. For each intermediate layer 
$l = 1, \dots, L-1 $, there is a latent vector that consists of $N^{(l)}$ neurons. Let $\mathbb{N}^{(l)} \equiv \{ 1, \dots, N^{(l)} \}$ denote the set of neuron indices in layer $l$.

We define a function $R$ that quantifies the relevance of any set of neurons or features in the model.  
Our framework is based on the idea that the relevance of a set of neurons should be measured by the information flow passing through the specified neurons.
Therefore, the smallest units into which interacting features of a neural network can be decomposed are walks.
Accordingly, we first define the relevance of walks 
in an intuitive way. By ``intuitive'' we mean that, for two disjoint sets of walks, the relevance of their union equals the sum of their individual relevances, 
that the empty set has zero relevance,
and
that the total relevance over all walks equals a fixed value---here, one. This behavior parallels how measures such as weight or volume are assigned to physical objects.

Formally, we define a \emph{relevance measure} in the following way:
\begin{definition} (Relevance of Set of Walks)
\label{dfn:RelevanceMeasure}
    Let  $\mathbb{W} \equiv   \otimes_{l = 0}^L \, \mathbb{N}^{(l)}$ be the Cartesian product of neuron indices in each layer of the FFNN, so that each element $ \bfn \in \mathbb{W}$ is a walk,
    i.e., $\bfn = (n^{(0)}, \dots, n^{(L)})$ with $n^{(l)} \in \mathbb{N}^{(l)}$. Let $\mcP(\mathbb{W})$ be the power set, i.e., the set of all subsets, of $\mathbb{W}$.
    We specify a \emph{relevance measure} to be    
    a set function $R : \mcP(\mathbb{W}) \rightarrow \mathbb{R}$ such that the following properties hold:
    \begin{align}
    R^{\mathrm{Walk}}( \emptyset) & = 0, \notag  \\
    R^{\mathrm{Walk}}( \mathbb{W})&  = 1,  &\text{(normalization)}
    \label{eq:RelevanceDefinitionNormalization}\\
\textstyle
R^{\mathrm{Walk}}(\bigcup_{j} \mcW_j)
&=\textstyle
\sum_{j} R^{\mathrm{Walk}}(\mcW_j), &\text{(additivity)}
    \label{eq:RelevanceDefinitionAdditivity}
\end{align}
    for any pairwise disjoint sets $\mcW_j \in \mathcal{P}(\mathbb{W})$.%
    \footnote{
With the relevance measure defined as \Cref{dfn:RelevanceMeasure}, the triple $(\mathbb{W}, \mathbb{P} (\mathbb{W}), R)$ forms a (normalized) signed measure space.
    }
\end{definition}

\newcolumntype{K}[1]{>{\raggedright\arraybackslash}p{#1}}

For explaining a model, we are mostly interested in the sets of walks that can be expressed as the Cartesian product $\mcW = \otimes_{l = 0}^L \, \mcS^{(l)} \in \mcP(\mathbb{W})$ of sets of neurons $\mcS^{(l)} \subset \mathbb{N}^{(l)}$ in each layer.   Specifically, we interpret the relevance of those walk sets as the joint relevance of neuron sets.

\begin{definition} (Joint Relevance of Set of Neurons)
\label{dfn:RelevanceNeurons}
Let $\mathbb{L} \equiv \{ 0, \dots, L \}$.
For a subset of layer indices $\mcL =\{l_1, \ldots, l_{|\mcL|}\} \subset \mathbb{L}$, 
we denote a sequence of neuron sets by $\mcS^{(\mcL)} = (\mcS^{(l)})_{l \in \mcL}$.
We identify the joint relevance of the set of neurons with the relevance of all walks that pass through the specified sets, 
i.e.,
\begin{align}
R(\mathcal{S}^{(\mcL)} )
 &=
R^{\mathrm{Walk}}(\otimes_{l = 0}^L \, \widetilde{\mcS}^{(l)}),
 \label{eq:A.NeuronRelevanceDefinition}\\
 \mbox{ where } & \quad
\widetilde{\mcS}^{(l)}
 = \textstyle \begin{cases}
  \mcS^{(l)} & \mbox{ for } l \in \mcL, \\
  \mathbb{N}^{(l)}  & \mbox{ for } l \notin \mcL.
\end{cases}
\label{eq:A.STildeDefinition}
\end{align}
\end{definition}

This definition of joint relevance closely parallels that of joint probability in probability
theory. Specifically, Eq.~\eqref{eq:A.NeuronRelevanceDefinition} reduces to the definition of joint probability if walks and
layers are replaced by events and random variables, respectively.
In this analogy, 
the probability of a set of \emph{events} is expressed in terms of a set of \emph{specified variables}, while all unspecified variables are implicitly \emph{marginalized out}.
Due to this analogy, it is convenient to use notation similar to the probability theory, and we interchangeably express the joint relevance as
\begin{align}
R(\mathcal{S}^{(\mcL)} )
=
R(\bfn^{(\mcL)} \in \mathcal{S}^{(\mcL)} )
=
R(n^{(l_1)} \in \mcS^{(l_1)}, \ldots, n^{(l_{|\mcL|})} \in \mcS^{(l_{|\mcL|})} )
&=
R(\mcS^{(l_1)}, \ldots, \mcS^{(l_{|\mcL|})} ).
\notag
\end{align}

Using Eq.~\eqref{eq:A.STildeDefinition}, which makes the implicit marginalization explicit,
the relevance of a set of neurons can be computed by summing up walk relevances:
\begin{align}
    R(\mcS^{(\mcL)})
& = \sum_{ \bfn \in \mathbb{W} :\, n^{(l)} \in \widetilde{\mcS}^{(l)}  \forall l \in \mcL } R(\bfn),
\label{eq:NeuronRelevanceMarginalization}
\end{align}
where $R(\bfn) = R(n^{(0)}, \ldots, n^{(L)})$ is the relevance of a single walk---which we call the full joint relevance.
Eq.~\eqref{eq:NeuronRelevanceMarginalization} implies that, once the relevance is defined for all individual walks, the relevance of any set of walks is uniquely determined---a direct consequence of the additivity \eqref{eq:RelevanceDefinitionAdditivity} with respect to walks.

When $\mcS^{(l)}$ in each layer consists of a single neuron $n^{(l)}$, we call the sequence $\mcS^{(\mcL)} = \bfn^{(\mcL)}$
a \emph{partial walk}.  As special instances, the relevance of a single neuron $R(n^{(l_1)})$ expresses the importance of a single neuron, while the joint relevance of two neurons $R(n^{(l_1)}, n^{(l_2)})$ expresses the importance of interactions between two neurons.  Note that the former is a marginal relevance of the latter:
\begin{align}
    R(n^{(l_1)})
    & = \sum_{n^{(l_2)} \in \mathbb{N}^{(l_2)}}R(n^{(l_1)}, n^{(l_2)}).
    \notag
\end{align}
\Cref{fig:A.JointMarginalConditionalRelevance} (a) and (b) visualize the walk relevance and the marginal relevance, respectively, as the corresponding sets of walks. We can see that the marginal relevance is the relevance of all walks that pass through the specified neurons.

If the specified layers are consecutive, i.e., $\mcL = \{ \underline{l}, \underline{l}+1, \dots, \overline{l}-1, \overline{l} \}$ 
for some $\underline{l} < \overline{l}$, we also write $\mcS^{(\underline{l}:\overline{l})}$ for the sequence of neuron sets $(\mcS^{(\underline{l})}, \dots, \mcS^{(\overline{l})})$, which we call a \emph{substructure}.
We call the corresponding relevance \emph{substructure relevance}:
 \begin{align}
R( \mcS^{(\underline{l}:\overline{l})})
  & =  
 \sum_{n^{(\underline{l})} \in \subg{\underline{l}}}   \cdots \sum_{n^{(\overline{l})} \in \subg{\overline{l}}} 
    R(n^{(\underline{l})}, \ldots, n^{(\overline{l})} ),
  \label{eq:A.SubstructureRelevance}
  \end{align}
  which is visualized in
  \Cref{fig:A.JointMarginalConditionalRelevance} (d).

We also define the conditional relevance in a similar way to the probability. Let $\mcL_1 , \mcL_2 \subseteq \mathbb{L}$ be disjoint sets of layer indices such that  $\mcL_1 \cap \mcL_2 = \emptyset$.  Then we define the conditional relevance 
as
\begin{align}
     R(\bfn^{(\mcL_1)} | \bfn^{(\mcL_2)})
     &= 
     \begin{cases}
        \frac{ R(\bfn^{(\mcL_1 \cup \mcL_2)} )}{R( \bfn^{(\mcL_2)})}& \mbox{ if } R(\bfn^{(\mcL_2)}) \neq 0, \\
        0 & \mbox{otherwise},
     \end{cases}
\end{align}
which is illustrated in 
\Cref{fig:A.JointMarginalConditionalRelevance} (c).
For a simple instance, $R(n^{(0)} | n^{(L)})$ expresses the importance of an input neuron $n^{(0)}$ to explain the output $n^{(L)}$.

The joint relevance, defined in %
\Cref{dfn:RelevanceNeurons}, shares the same properties as the probability measure, except the non-negativity.  Specifically, it fulfills
\begin{align}
    R(\mcS^{(l)} ) &= 1- R( \overline{\mcS}^{(l)} ), \mbox{ where }  \overline{\mcS}^{(l)} = \mathbb{N}^{(l)} \setminus \mcS^{(l)},
    \notag\\
    R(n^{(l_1)}, n^{(l_2)}) &= R(n^{(l_1)}| \, n^{(l_2)}) R(n^{(l_2)}),
    \notag\\
    R(\mcS_1^{(l)} \cup \mcS_2^{(l)}) &= R( \mcS_1^{(l)} ) +R( \mcS_2^{(l)} )  - R(\mcS_1^{(l)} \cap \mcS_2^{(l)}),
    \notag
    \end{align}
which are called the \emph{complementary law}, \emph{multiplication law}, and \emph{addition law}, respectively.
We collectively call those properties \emph{relevance laws} in analogy to the probability laws. 
For reader's convenience, we
provide a summary table of the relevance definitions and  relevance laws in \ref{sec:A.RelevanceDefinisionAndLaws}.

\subsection{Walk relevance specification}
\label{sec:WalkRelevanceSpecification}

The NRM framework is complete when we specify the relevances of all individual walks for a given neural network and propagation rules.
Although the relevance of a walk is already an established notion \citep{schnake2020higher,DBLP:conf/icml/XiongSMMN22,DBLP:conf/icml/XiongSGMMN23}, we define it differently for generality.  We clarify the relation between our definition and the original definition of walk relevance
in \Cref{sec:GeneralityNRM}.

Since proper FFNNs have no skip connections, it is reasonable to assume the Markov property:
  \begin{align}
 \Rel(n^{(\il)}, \ldots, n^{(\ol)}) & =\textstyle  \left(\prod_{l=1}^{L}  \Rel(n^{(l-1)} | n^{(l)}) \right)   R(n^{(\ol)}).
 \label{eq:JointRelevanceDecomposition}
 \end{align}
Given a test input $\bfx \in \mathbb{R}^{\numneu{\il}}$,
we set an
\emph{unnormalized output relevance vector} based on the network output $\bff(\bfx)$; for example,  
\begin{align}    
\widetilde{\bfr}^{(\ol)} = \bff(\bfx) \in \mathbb{R}^{\numneu{\ol}}.
\label{eq:OutputRelevanceExample}
\end{align}
We also set \emph{unnormalized propagation matrices}
 $\{\widetilde{\bfT}^{(l)} \in \mathbb{R}^{\numneu{l-1} \times \numneu{l}}\}_{l=1}^L$, depending on the choice of LRP rules \citep{bach2015pixel,montavon2018methods,DBLP:journals/pieee/SamekMLAM21}.
    For example, choosing the LRP-$\gamma$ rule with $\gamma \geq 0$, we set 
\begin{align}
 \widetilde{T}^{(l)}_{n, n'} = 
 h^{(l-1)}_{n} W^{(l)\uparrow}_{n, n'} + \varepsilon,
\label{eq:UnnormalizedPropagationMatrixGamma}
\end{align}
where $\bfh^{(l)} \in \mathbb{R}^{\numneu{l}}$ is the activation at the $l$-th layer,
and 
$ \bfW ^{(l)\uparrow} \in \mathbb{R}^{\numneu{l-1} \times \numneu{l}}$ is a modified network weight parameter at the $l$-th layer such that
$\bfW ^{\uparrow} := \bfW  + \gamma\cdot \max(0, \bfW  )$
with the maximization operator applying entry-wise.  
Here, $\varepsilon > 0$ is a small constant for stabilization.

Assuming 
that the unnormalized propagation matrices and output relevance vector  satisfy the \emph{positive-sum conditions},
\begin{align}
\textstyle
 \sum_{n = 1}^{\numneu{l-1}} \widetilde{T}^{(l)}_{n, n'} 
&>0,  
\qquad
 \forall n' = 1, \ldots,  \numneu{l},  \forall l = 1, \ldots, L,
\label{eq:PositiveSumConditionPropagationMatrix} \\
 \textstyle
   \sum_{n =1}^{N^{(\ol)}}
 \widetilde{r}^{(\ol)}_{n} & > 0,
\label{eq:PositiveSumConditionOutputRelevance} 
\end{align}
we define the consecutive conditional relevance and the output relevance by normalizing them:
\begin{align}
R(n^{(l-1)} | n^{(l)})
 &  = \textstyle
  \frac{   \widetilde{T}^{(l)}_{n^{(l-1)}, n^{(l)}}  }{  \sum_{n'^{(l-1)}}    \widetilde{T}^{(l)}_{n'^{(l-1)}, n^{(l)}}   } ,
 \label{eq:ConsecutiveConditionalRelevanceContruction}\\
 R(n^{(\ol)})
  & = \textstyle
  \frac{
\widetilde{r}^{(\ol)}_{n^{(\ol)}}
  }
  {
   \sum_{n'^{(\ol)}}
\widetilde{r}^{(\ol)}_{n'^{(\ol)}}
  }.
 \label{eq:OutputRelevanceContruction}
 \end{align}
 With Eqs.\eqref{eq:ConsecutiveConditionalRelevanceContruction} and \eqref{eq:OutputRelevanceContruction}, 
Eq.~\eqref{eq:JointRelevanceDecomposition} gives the explicit definition of the relevance for any walk.

\paragraph{Multiple variables in each layer}
NNs with complex architectures consist of neurons that express multiple ``variables'' in each layer (see an example case with RNN discussed in \Cref{sec:NormalizationProperty}).
In such cases, one can independently choose LRP-rules for each pair of input and output variables.
 Assume that the $(l-1)$-th layer consists of the neurons expressing variables $\{\bfmu^{(l-1, j)} \in \mathbb{R}^{N^{(l-1, j)}}\}_{j=1}^J$ and the $l$-th layer consists of the neurons expressing variables $\{\bfnu^{(l, k)} \in \mathbb{R}^{N^{(l, k)}} \}_{k=1}^{K}$ with the forward process in the following form:
 \begin{align}
\bfnu^{(l, k)} =    \bfg^{(k)} (\{ \bfW^{(l, j \to k)\T} \bfmu^{(l-1, j)} \}_{j=1}^J) 
\label{eq:ComplicatedForwardProcessGeneral}
 \end{align}
 for $k = 1, \ldots, K$.
Here $\bfW^{(l, j \to k)} \in \mathbb{R}^{N^{(l-1, j)}\times N^{(l, k)} }$ is the network weight parameter matrix responsible for the conversion from $\bfmu^{(l-1, j)}$ to $\bfnu^{(l, k)}$  (see \Cref{fig:MultipleVariables}).
In this case, we first define unnormalized relevance propagation matrices $\{\widetilde{\bfT}^{(l, j \leftarrow k)} \in \mathbb{R}^{N^{(l-1, j)}\times N^{(l, k)} } \}$ between each pair of variables.  For example, if we apply the LRP-$\gamma$ rule, we set
\begin{align}
 \widetilde{T}_{n, n'}^{(l, j \leftarrow k)} = 
 \mu^{(l-1, j)}_{n} W^{(l, j \to k)\uparrow}_{n, n'} + \varepsilon,
\label{eq:UnnormalizedPropagationMatrix}
\end{align}
for $j = 1, \ldots, J$, and $k = 1, \ldots, K$.
Then, the complete unnormalized propagation matrix $\widetilde{\bfT}^{(l)} \in \mathbb{R}^{N^{(l-1)}\times N^{(l)} } $, where $N^{(l-1)} =\sum_{j=1}^J N^{(l-1, j)}$
and $N^{(l)} =\sum_{k=1}^K N^{(l, k)}$, can be defined as
\begin{figure}[t]
    \centering
\qquad\qquad
    \begin{subfigure}[t]{0.2\linewidth}
        \centering
        \includegraphics[width=\linewidth]{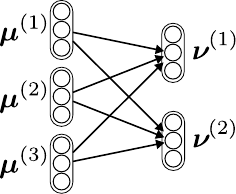}
        \caption{}
        \label{fig:MultipleVariables}
    \end{subfigure}
    \qquad \qquad
    \begin{subfigure}[t]{0.5\linewidth}
        \centering
        \includegraphics[width=\linewidth]{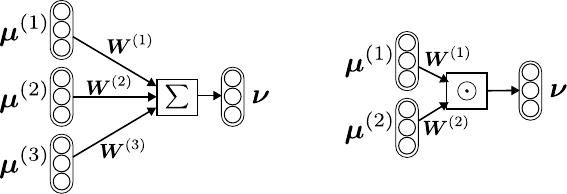}
        \caption{}
        \label{fig:MultipleVariablesAdditiveMultiplicative}
    \end{subfigure}
    \vskip -1em
    \caption{(a) Multiple input and output variables. (b) Additive (left) and multiplicative (right) forward processes.}
    \label{fig:ForwardProcesses}
\end{figure}
\begin{align}
    \widetilde{\bfT}^{(l)}
    & = 
    \begin{pmatrix}
      \alpha^{(l, 1 \leftarrow 1)}  \widetilde{\bfT}^{(l, 1 \leftarrow 1)} & \cdots &    \alpha^{(l, 1 \leftarrow k )}  \widetilde{\bfT}^{(l, 1 \leftarrow K)} \\
        \vdots &\ddots  & \vdots \\
     \alpha^{(l, J \leftarrow 1)}   \widetilde{\bfT}^{(l, J \leftarrow 1)} & \cdots &  \alpha^{(l, J \leftarrow K)}\widetilde{\bfT}^{(l, J \leftarrow K)}     \end{pmatrix},
        \label{eq:WholeUnnormalizedPropagationMatrix}
\end{align}
where $\{  \alpha^{(l,  j \leftarrow k )} \in \mathbb{R}\} $ are controllable coefficients specifying the relevance distribution from the output to the input.
Those coefficients are usually set to $\alpha^{(l, j \leftarrow k)}  = 1$ if the input contributions are additive (see \Cref{fig:MultipleVariablesAdditiveMultiplicative} left) in the forward process \eqref{eq:ComplicatedForwardProcessGeneral}, i.e.,
 \begin{align}
\bfnu^{(l, k)} =  \textstyle  \bfg^{(k)} \left(\sum_{j=1}^J  \bfW^{(l, j \to k)\T} \bfmu^{(l-1, j)} \right) 
\label{eq:ComplicatedForwardProcessAdditive}
 \end{align}
 for $k = 1, \ldots, K$.
 In contrast, they are typically set to extreme values to ignore some inputs if multiplications are involved (see \Cref{fig:MultipleVariablesAdditiveMultiplicative} right).  For example, LRP-all rule \citep{DBLP:series/lncs/ArrasAWMGMHS19} propagates no relevance to the attention neurons.  Namely, it sets to $\alpha^{(l, 1 \leftarrow k )} = 0, \alpha^{(l, 2 \leftarrow k)} = 1$,
if the forward process is 
 \begin{align}
\bfnu^{(l, k)}& =    \bfg^{(k)} \Big(
\sigma \left(\bfW^{(l, 1 \to k)\T}  \bfmu^{(l-1, 1)} \right)
\odot
\left(\bfW^{(l, 2 \to k)\T} \bfmu^{(l-1, 2)}  \right)
\Big) ,
\label{eq:ComplicatedForwardProcessMultiplicative}
 \end{align}
where $\bfW^{(l, 1 \to k)\T}  \bfmu^{(l-1, 1)}$ provides the attention through the (element-wise) sigmoid activation $\sigma(\cdot)$,
and $\odot$ denotes the entry-wise product of vector or matrix.
With the whole matrix \eqref{eq:WholeUnnormalizedPropagationMatrix} defined, 
the consecutive conditional relevance is given by 
Eq.~\eqref{eq:ConsecutiveConditionalRelevanceContruction} for the $l$-th  layer.

Defining the propagation matrices for all layers, as well as the output relevance \eqref{eq:OutputRelevanceContruction}, completes the full joint relevance definition \eqref{eq:JointRelevanceDecomposition}.

\subsection{LRP as message passing}
\label{sec:MessagePassingAlgorithms}

As pointed out by \citet{DBLP:conf/icml/XiongSMMN22},
the joint relevance \eqref{eq:JointRelevanceDecomposition}
has the same factorizabilty to the probability of a Markov chain, and therefore,
one can efficiently compute the relevance of a set of neurons by \emph{sum-product message passing} \citep{bishop2006patternchp8}.
As a generalition of the sum-product algorithms proposed in \citet{DBLP:conf/icml/XiongSMMN22}, the following theorem provides LRP algorithms for attributing arbitrary sets of neurons (proof is given in \ref{sec:A.ProofLRPasMessagePassing}):
\begin{theorem} (LRP as Message Passing)
\label{thrm:LRPasMessagePassing}
Assume that, given a specified neuron set $\mcS^{(\mcL)} = (\mcS^{(l)})_{l \in \mcL}$, we performed the message passing,
\begin{align}
\msgbel{l-1}_{n^{(l-1)}} & = \textstyle \sum_{n^{(l)} \in \widetilde{\mcS}^{(l)}}    R(n^{(l-1)}| n^{(l)})  \msgbel{l}_{n^{(l)}}  ,
\label{eq:A.LRPasMP}
\end{align}
for $l = L, \ldots, l_1+1$,
where
$l_1 = \min(\mcL)$,
$\widetilde{\mcS}^{(l)}$ is defined in Eq.~\eqref{eq:A.STildeDefinition},
and
$\msgb{l}  \in \mathbb{R}^{\numneu{l}}$ is the message at the $l$-th layer
with the initial message $\msgbel{L}_{n^{(L)}} = R(n^{(L)})$ 
being the output relevance.
Then, the final message gives 
\begin{align}
\Rel(\mcS^{(\mcL)})
&=\textstyle
\sum_{n^{(l_1)} \in \widetilde{\mcS}^{(l_1)}}
\msgbel{l_1}_{n^{(l_1)}}.
\label{eq:A.LRPasMP.Result}
\end{align}
\end{theorem}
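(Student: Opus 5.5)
The plan is to unroll the recursion \eqref{eq:A.LRPasMP} and compare the result with the marginalization formula \eqref{eq:NeuronRelevanceMarginalization} combined with the Markov factorization \eqref{eq:JointRelevanceDecomposition}. Concretely, I will prove by downward induction on $l = L, L-1, \ldots, l_1$ the invariant
\begin{align}
\msgbel{l}_{n^{(l)}}
= \sum_{n^{(l+1)} \in \widetilde{\mcS}^{(l+1)}} \cdots \sum_{n^{(L)} \in \widetilde{\mcS}^{(L)}}
\Big( \textstyle\prod_{k=l+1}^{L} R(n^{(k-1)} | n^{(k)}) \Big) R(n^{(L)}),
\notag
\end{align}
for every $n^{(l)} \in \mathbb{N}^{(l)}$, where the empty sum/product at $l=L$ reduces to the initialization $\msgbel{L}_{n^{(L)}} = R(n^{(L)})$. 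The induction step substitutes the hypothesis for $\msgbel{l}$ into \eqref{eq:A.LRPasMP} and pulls the factor $R(n^{(l-1)}|n^{(l)})$ inside the nested finite sums by distributivity; this is pure bookkeeping.

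Next, with $\msgbel{l_1}$ in hand, I sum over $n^{(l_1)} \in \widetilde{\mcS}^{(l_1)}$. It remains to account for the layers $0, \ldots, l_1-1$, which are not in $\mcL$ so that $\widetilde{\mcS}^{(l)} = \mathbb{N}^{(l)}$ there. Here I use that the consecutive conditional relevances defined in \eqref{eq:ConsecutiveConditionalRelevanceContruction} are column-normalized: $\sum_{n^{(l-1)} \in \mathbb{N}^{(l-1)}} R(n^{(l-1)}|n^{(l)}) = 1$ for every $n^{(l)}$, which is well defined thanks to the positive-sum condition \eqref{eq:PositiveSumConditionPropagationMatrix}. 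Inserting these factors for $l = l_1, l_1-1, \ldots, 1$ (summing from layer $0$ upward) multiplies the expression by $1$. This turns $\sum_{n^{(l_1)} \in \widetilde{\mcS}^{(l_1)}} \msgbel{l_1}_{n^{(l_1)}}$ into the sum of the full product $\big(\prod_{l=1}^{L} R(n^{(l-1)}|n^{(l)})\big) R(n^{(L)})$ over all walks $\bfn \in \otimes_{l=0}^L \widetilde{\mcS}^{(l)}$.

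By \eqref{eq:JointRelevanceDecomposition} this summand is the walk relevance $R(\bfn)$. By \eqref{eq:NeuronRelevanceMarginalization}, equivalently Definition~\ref{dfn:RelevanceNeurons} together with additivity \eqref{eq:RelevanceDefinitionAdditivity}, the sum equals $R(\mcS^{(\mcL)})$, which gives \eqref{eq:A.LRPasMP.Result}.

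The only step that is not routine is handling the unspecified layers below $l_1$. They are absent from the message recursion, so I must argue explicitly that marginalizing them out contributes a factor of one; the row-sum normalization argument is where the construction \eqref{eq:ConsecutiveConditionalRelevanceContruction} is essential. Layers above $l_1$ that are not in $\mcL$ need no special treatment, because the recursion already sums over all of $\mathbb{N}^{(l)}$ there.
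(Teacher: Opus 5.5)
Your proof is correct and is essentially the paper's argument: expand $R(\mathcal{S}^{(\mathcal{L})})$ as a nested sum of the Markov-factorized walk relevances over $\otimes_l \widetilde{\mathcal{S}}^{(l)}$, eliminate the layers below $l_1$ using the column normalization of \eqref{eq:ConsecutiveConditionalRelevanceContruction}, and identify the remaining nested sums with the messages. The only difference is presentational: you write the paper's recursive substitution of innermost sums as an explicit downward induction and run the equalities from the messages back to the relevance (your closing reference to ``row-sum'' normalization should read column-sum, as you stated earlier).
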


The LRP algorithm \eqref{eq:A.LRPasMP}
can compute the relevance of any set of neurons $\mcS^{(\mcL)}$.
Conversely, 
since Eq.~\eqref{eq:A.LRPasMP} is the general form of LRP algorithm for any proper FFNN, the NRM framework guarantees that any LRP algorithm with normalized propagation matrices and output relevance computes the normalized relevance of some set of neurons.
In \Cref{sec:ExistingLRPinNRM}, 
we recast existing LRP algorithms within the NRM framework, and identify the target relevance quantities they compute.

\subsection{Generality and extensions}
\label{sec:GeneralityNRM}

As discussed in the previous subsections, our NRM framework is general in the sense that it can attribute any set of neurons in any network architecture.
In this section, we show that the framework is indeed a generalization of existing propagation-based XAI methods.
We further discuss an alternative definition of relevance for neuron sets with different properties, and 
the possibility to extend the framework to occlusion- and gradient-based XAI methods.

\subsubsection{Revisiting existing LRP algorithms with NRM}
\label{sec:ExistingLRPinNRM}

We start from clarifying the relation between the full joint relevance---the basic unit of relevance attribution in our NRM framework---and the original definition of walk relevance \citep{schnake2020higher}.
Given unnormalized propagation matrices and output relevance, the original walk relevance is defined as
 \begin{align}
 \widetilde{r}_{\bfn}
=
 \widetilde{r}_{(n^{(\il)}, \ldots, n^{(\ol)})} &= \left( \textstyle \prod_{l=1}^{L}   \widetilde{T}^{(l)}_{n^{(l-1)}, n^{(l)}} \right)  \widetilde{r}^{(\ol)}_{n^{(\ol)}},
 \label{eq:OriginalWalkRelevance}
 \end{align}
 which is considered to be an approximate activation flow of FFNNs, according to the deep Taylor decomposition interpretation \citep{DBLP:journals/pr/MontavonLBSM17}.
The following theorem holds
(proof is given in \ref{sec:A.ProofWalkRelevanceRelation}):
\begin{theorem} 
  (Relation to original walk relevance definition)
\label{thrm:EquivalenceToOriginalWalkRelevance}
Assume that the unnormalized propagation matrices and output relevance satisfy the constant-sum conditions:
\begin{align}
\textstyle \sum_{n = 1}^{\numneu{l-1}} \widetilde{T}^{(l)}_{n, n'} 
& =  \fconst{l}, 
\quad
\mbox{ for some } \{\fconst{l} > 0 \}_{l = 1}^{L}
, \forall n' = 1, \ldots,  \numneu{l}, \forall l = 1, \ldots, L,
\label{eq:A.ConstantSumConditionPropagationMatrix}\\
 \textstyle
  \sum_{n = 1}^{N^{(\ol)}}
 \widetilde{r}^{(\ol)}_{n} & > 0.    \label{eq:A.ConstantSumConditionOutputRelevance}
\end{align}
Then, the full joint relevance in the NRM framework coincides with the normalized version of the original walk relevance, i.e.,
 \begin{align}
 \Rel(\bfn) 
  &= 
\Rel(n^{(\il)}, \ldots, n^{(\ol)}) 
 =
 \frac{ \widetilde{r}_{\bfn}}
{
\sum_{\bfn' \in \mathbb{W}}
 \widetilde{r}_{\bfn'}}.
 \label{eq:A.JointRelevanceAndWalkRelevance}
 \end{align}
\end{theorem}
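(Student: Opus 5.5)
We plan a direct computation from the Markov factorization \eqref{eq:JointRelevanceDecomposition} together with the normalizations \eqref{eq:ConsecutiveConditionalRelevanceContruction} and \eqref{eq:OutputRelevanceContruction}. Note first that the constant-sum conditions \eqref{eq:A.ConstantSumConditionPropagationMatrix}--\eqref{eq:A.ConstantSumConditionOutputRelevance} imply the positive-sum conditions \eqref{eq:PositiveSumConditionPropagationMatrix}--\eqref{eq:PositiveSumConditionOutputRelevance}, so the NRM walk relevance is well defined.

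\textbf{Step 1: rewrite the NRM walk relevance.}
Under \eqref{eq:A.ConstantSumConditionPropagationMatrix}, the denominator in \eqref{eq:ConsecutiveConditionalRelevanceContruction} is the constant $\fconst{l}$, independent of $n^{(l)}$. Hence $R(n^{(l-1)}|n^{(l)}) = \widetilde{T}^{(l)}_{n^{(l-1)},n^{(l)}}/\fconst{l}$. Writing $Z = \sum_{n} \widetilde{r}^{(\ol)}_{n} > 0$, substitution into \eqref{eq:JointRelevanceDecomposition} gives
\begin{align}
R(\bfn) = \frac{\widetilde{r}_{\bfn}}{Z \prod_{l=1}^L \fconst{l}},
\notag
\end{align}
using the definition \eqref{eq:OriginalWalkRelevance}.

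\textbf{Step 2: identify the constant.}
We show $\sum_{\bfn'\in\mathbb{W}} \widetilde{r}_{\bfn'} = Z\prod_{l}\fconst{l}$. To do so, sum \eqref{eq:OriginalWalkRelevance} over $n^{(0)}$ first, which yields a factor $\fconst{1}$. Then sum over $n^{(1)}$, yielding $\fconst{2}$, and continue layer by layer up to $n^{(L-1)}$. Finally, summing $\widetilde{r}^{(\ol)}_{n^{(L)}}$ over $n^{(L)}$ gives $Z$. This is an induction on $l$ in which each inner sum over $n^{(l-1)}$ is independent of $n^{(l)}$ by the constant-sum condition.

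Alternatively, we can note that the NRM relevance is normalized, $R(\mathbb{W})=1$. Summing the expression from Step 1 over all walks then gives the identity directly. This follows from the row-stochastic structure: each conditional sums to one over $n^{(l-1)}$, and the output relevance sums to one.

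\textbf{Step 3: conclude.}
Combining Steps 1 and 2 gives \eqref{eq:A.JointRelevanceAndWalkRelevance}.

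The only delicate point is Step 1, namely the observation that column sums being constant, rather than merely positive, is exactly what allows the normalizers to factor out of the product independently of the walk. Without this, the per-walk normalizers would differ, and the equality would fail in general.
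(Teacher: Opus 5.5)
Your proof is correct, but it runs in the opposite direction from the paper's.

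You start from the NRM definition \eqref{eq:JointRelevanceDecomposition}. You note that under the constant-sum condition every normalizer is a walk-independent constant, so $R(\bfn)=\widetilde r_{\bfn}/(Z\prod_{l}C^{(l)})$. You then identify $\sum_{\bfn'}\widetilde r_{\bfn'}=Z\prod_l C^{(l)}$ by a telescoping sum.

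The paper instead takes the normalized original walk relevance $\widetilde r_{\bfn}/\sum_{\bfn'}\widetilde r_{\bfn'}$ as a candidate measure and works out its structure:
\begin{enumerate}
\item It computes the backward conditionals $R(n^{(l-1)}\mid n^{(l)},\ldots,n^{(L)})$ by marginalizing out layers $0,\ldots,l-2$; the factors $\prod_{l'<l}C^{(l')}$ cancel between numerator and denominator.
\item It shows these conditionals depend only on $n^{(l)}$ and equal $\widetilde T^{(l)}_{n^{(l-1)},n^{(l)}}/\sum_{n'}\widetilde T^{(l)}_{n',n^{(l)}}$.
\item It computes the output marginal as $\widetilde r^{(L)}_{n^{(L)}}/Z$.
\item It concludes that the chain-rule factorization coincides factor by factor with the NRM definition.
\end{enumerate}

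The paper's route makes the backward Markov property of the normalized original walk relevance an explicit intermediate result. This is the point invoked later to explain why rules violating the constant-sum condition, such as LRP-$\varepsilon$, cannot be handled by simply normalizing $\widetilde r_{\bfn}$.

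Your route is shorter and exhibits the normalizing constant explicitly, so positivity of the denominator in \eqref{eq:A.JointRelevanceAndWalkRelevance} is immediate. It also only ever divides by $C^{(l)}$ and $Z$. The paper's cancellation divides by marginals of $\widetilde r$ that can vanish (e.g.\ if $\widetilde r^{(L)}_{n^{(L)}}=0$). It passes over that degenerate case, harmlessly, since both sides vanish there. The Markov property follows from your result as a corollary anyway.

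One wording slip: in Step~2 the normalized kernels $R(\cdot\mid n^{(l)})$ sum to one over the row index $n^{(l-1)}$ for each column $n^{(l)}$. These are column sums, as you correctly say at the end, so ``row-stochastic'' is the wrong name.
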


Since Eqs.\eqref{eq:ConsecutiveConditionalRelevanceContruction}, \eqref{eq:OutputRelevanceContruction}, and \eqref{eq:A.JointRelevanceAndWalkRelevance} are normalization operators, it trivially holds that
\begin{align} 
R(n^{(l-1)} | n^{(l)}) 
& = \widetilde{T}^{(l)}_{n^{(l-1)}, n^{(l)}},
\quad
R(n^{(\ol)}) 
=  \widetilde{r}^{(\ol)}_{n^{(\ol)}},
\quad
\Rel(\bfn)
=
  \widetilde{r}_{\bfn},
\label{eq:A.EquivalanceNormalizedWalkRelevance}
\end{align}
if the propagation matrices and output relevance are normalized, i.e., 
\begin{align}
\textstyle
\sum_{n=1}^{N^{(l-1)}}\widetilde{T}^{(l)}_{n,n'} 
& = 
C^{(l)} = 1,  
\qquad
\forall n' = 1, \ldots,  \numneu{l}, 
\forall l = 1, \ldots, L, 
\label{eq:A.NormalizedConditionPropagationMatrix}\\
 \textstyle
 \sum_{n=1}^{N^{(\ol)}}
 \widetilde{r}^{(\ol)}_{n} & = 1.
 \label{eq:A.NormalizedConditionOutputRelevance}
 \end{align}

For most common LRP rules, the propagation matrices are defined in (approximately) normalized forms (see \ref{sec:A.CommonLRPRules}), and thus
satisfy the normalization condition \eqref{eq:A.NormalizedConditionPropagationMatrix}. 
Although the output relevance is usually not normalized, its normalization only affects the absolute scale of relevance, which is not essential in typical XAI scenarios, e.g., where heat-maps are depicted or relevant features are ranked.

\begin{table}[t]
\caption{NRM expression of the target relevance quantities that existing LRP algorithms compute. 
}
\vspace{-4mm}
\center
\begin{tabular} {K{.4\textwidth}K{.5\textwidth}}
\toprule
LRP Algorithm
& Target relevance quantity \\
\midrule
Standard LRP \cite{bach2015pixel} & $ R(n^{(\il)},  {n}^{(\ol)})$ \\
ConRel \cite{ConveptRelevancePropagation} & $ R(n^{(l)}, n^{(\ol)})$ \\
CRP \cite{ConveptRelevancePropagation} & $R(n^{(\il)}, n^{(l)},  n^{(\ol)})$\\
Subgraph LRP \cite{DBLP:conf/icml/XiongSMMN22} & $R(n^{(\underline{l}:\overline{l})} \in \mcS^{(\underline{l}:\overline{l})} | n^{(\ol)})$\\
\bottomrule
\end{tabular}
\label{table:LRPinNRFramework.Small}
\end{table}

Assuming the normalization conditions \eqref{eq:A.NormalizedConditionPropagationMatrix} and \eqref{eq:A.NormalizedConditionOutputRelevance},
we can straightforwardly recast existing LRP methods within the NRM framework, and identify the quantities they compute, as summarized in \Cref{table:LRPinNRFramework.Small}.
Here, the standard LRP \citep{bach2015pixel,DBLP:journals/pieee/SamekMLAM21} is the original LRP algorithm to explain a chosen output $n^{(L)}$ by attributing relevance to each input neuron $n^{(0)}$, Concept Relevance (ConRel), which is the objective function for Relevance Maximization (RelMax) \citep{ConveptRelevancePropagation},  attributes relevance to an intermediate neuron $n^{(l)}$ for $0 < l < L$,
Concept Relevance Propagation (CRP) \citep{ConveptRelevancePropagation} explains the contribution of an intermediate neuron $n^{(l)}$ by attributing relevance to each input neuron $n^{(0)}$,
and GNN-LRP \citep{schnake2020higher,DBLP:conf/icml/XiongSMMN22}
attributes relevance to a substructure $\mcS^{(\underline{l}:\overline{l})}$ within a GNN. 
In \ref{sec:A.RalationToLRPAlgorithms}, we provide detailed descriptions of those LRP algorithms,  derivations of the target relevance quantities, and illustrations of the relevance quantities as sets of walks.
We furtheremore recast LRP methods for attributing neuron sets, including Symbolic XAI (SymbXAI) \citep{SCHNAKE2025}, in the NRM framework.

\subsubsection{Additive relevance definition for neuron sets}
\label{sec:AdditiveRelevanceDefinition}

As discussed in the previous sections, the joint relevance definition (\Cref{dfn:RelevanceNeurons}) enables the analysis of information flows that pass through a specified set of neurons,
and 
the relevance of any such set can be defined through simple marginalization and conditioning operations. Furthermore, the joint relevance establishes a connection between message passing algorithms for probability computation and the LRP algorithms for XAI. 
However, 
the normalization and additivity hold \emph{layer-wise}, and adding a neuron from a new layer decreases the joint relevance, analogous to the decrease of joint probability as more variables are included.
As a result, the joint relevance does not provide an \emph{additive} decomopsition of the output relevance over the neurons in the whole network.
Here, we briefly discuss an alternative relevance definition in which relevance forms a normalized signed measure with respect to (not only walks but also) neurons.

Let $\mathbb{S}$ be the set of all neurons in the network considered.
Based on the walk relevance definition (\Cref{dfn:RelevanceMeasure}),
a general definition of the relevance of a set $\mcS \subseteq \mathbb{S}$ of neurons can be given as
    \begin{align}
    R^{\mathrm{Neu}}(\mcS)
& =  \sum_{ \bfn \in \mathbb{W}} R^{\mathrm{Walk}}(\{\bfn\}) \phi(\bfn, \mcS),
\label{eq:GeneralNeuronSetRelevance}
\end{align}
where $\phi: \mathbb{W} \times \mathbb{S} \mapsto  \mathbb{R}$ is a weighting function that determines the properties of the relevance.  Note that the joint relevance definition (\Cref{dfn:RelevanceNeurons}) is recovered by setting 
$\phi(\bfn, \mcS^{(\mcL)}) = \mathbbm{1}(n^{(l)} \in {\mcS}^{(l)},  \forall l \in \mcL )$.

By first distributing walk relevance to each neuron, and then summing them up,
we can define the relevance of sets of neurons such that the additivity with respect to neurons holds.
Let $\mathbbm{1}({\cdot})$ be the indicator function that equals one if the event is true, and zero otherwise.
 \begin{definition} (Additive Relevance of Set of Neurons)
 \label{def:AdditiveRelevanceDefinition}
 We define an additive relevance by the set function \eqref{eq:GeneralNeuronSetRelevance} of neurons with the weight function  
$\phi(\bfn, \mcS) = \frac{
\sum_{n \in \bfn} \mathbbm{1} ({ n \in \mcS) }
}{
|\bfn|
}$,
where  the walk $\bfn$ is seen as a set of neurons.
\end{definition}
\begin{theorem}
\label{thrm:Properties.AdditiveRelevance}
The additive relevance is a signed measure function with respect to the sets of neurons across layers, i.e., it satisfies
\begin{align}
    R^{\mathrm{Add}}( \emptyset)  = 0, 
\qquad
    R^{\mathrm{Add}}( \mathbb{S})  = 1,  
\qquad
 \textstyle
R^{\mathrm{Add}}(\bigcup_{j} \mcS_j)
=\textstyle
\sum_{j} R^{\mathrm{Add}}(\mcS_j), 
    \notag
\end{align}
    for any pairwise disjoint sets $\mcS_j \in \mathcal{P}(\mathbb{S})$.%
\end{theorem}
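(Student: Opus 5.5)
The proof works directly from the definition $R^{\mathrm{Add}}(\mcS) = \sum_{\bfn \in \mathbb{W}} R^{\mathrm{Walk}}(\{\bfn\})\,\phi(\bfn,\mcS)$, where $\phi(\bfn,\mcS) = \frac{1}{|\bfn|}\sum_{n\in\bfn}\mathbbm{1}(n\in\mcS)$. The key observation is that, for each fixed walk $\bfn$, the map $\mcS \mapsto \phi(\bfn,\mcS)$ is itself a normalized (nonnegative) measure on $\mathcal{P}(\mathbb{S})$. It is the uniform distribution on the $|\bfn| = L+1$ neurons visited by the walk. A walk visits exactly one neuron per layer, and neurons in different layers are distinct elements of $\mathbb{S}$, so $\bfn$ viewed as a set has exactly $L+1$ elements. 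Each of the three claimed properties of $R^{\mathrm{Add}}$ then follows by taking a finite linear combination, with weights $R^{\mathrm{Walk}}(\{\bfn\})$, of the corresponding property of $\phi(\bfn,\cdot)$.

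The steps run as follows. First, $\phi(\bfn,\emptyset)=0$ for every $\bfn$, which gives $R^{\mathrm{Add}}(\emptyset)=0$. Second, every neuron of $\bfn$ lies in $\mathbb{S}$, so $\phi(\bfn,\mathbb{S}) = |\bfn|/|\bfn| = 1$. Hence
\[
R^{\mathrm{Add}}(\mathbb{S}) = \sum_{\bfn\in\mathbb{W}} R^{\mathrm{Walk}}(\{\bfn\}) = R^{\mathrm{Walk}}(\mathbb{W}) = 1,
\]
where the middle equality uses the additivity \eqref{eq:RelevanceDefinitionAdditivity} applied to the disjoint singletons $\{\bfn\}$, and the last uses the normalization \eqref{eq:RelevanceDefinitionNormalization} in \Cref{dfn:RelevanceMeasure}. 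Third, let $\{\mcS_j\}$ be pairwise disjoint. Each neuron $n$ lies in at most one $\mcS_j$, so $\mathbbm{1}(n\in\bigcup_j\mcS_j) = \sum_j \mathbbm{1}(n\in\mcS_j)$, and therefore $\phi(\bfn,\bigcup_j\mcS_j) = \sum_j\phi(\bfn,\mcS_j)$. Multiplying by $R^{\mathrm{Walk}}(\{\bfn\})$, summing over $\bfn$, and exchanging the order of summation yields the additivity of $R^{\mathrm{Add}}$.

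There is no real obstacle here; the only points needing care are bookkeeping. One must justify $|\bfn|=L+1$ by identifying neurons as layer-indexed pairs, so that $|\bfn|$ is constant and never zero. One must also justify exchanging the sums. Since $\mathbb{W}$ and $\mathbb{S}$ are finite, only finitely many of the $\mcS_j$ can be nonempty, so all sums are finite and the exchange is trivially valid. I would note that, unlike a probability measure, $R^{\mathrm{Add}}$ may take negative values because walk relevances can be negative. This is why the statement asserts only a signed measure, and nothing in the argument above uses nonnegativity.
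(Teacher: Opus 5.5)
Your proposal is correct and follows essentially the same argument as the paper: both use $\phi(\bfn,\emptyset)=0$, then $\phi(\bfn,\mathbb{S})=1$ together with $\sum_{\bfn} R^{\mathrm{Walk}}(\{\bfn\}) = 1$, and finally split the indicator of a disjoint union and sum term by term over walks. The only differences are that the paper writes out additivity for two sets $\mcS_1,\mcS_2$ while you handle an arbitrary pairwise disjoint family directly, and that you add harmless bookkeeping on $|\bfn|=L+1$ and on the finiteness of the sums.
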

The proof of \Cref{thrm:Properties.AdditiveRelevance} and a preliminary numerical comparison with the joint relevance definition are given in \ref{sec:A.DetailsAdditiveRelevance}.

Although the joint relevance definition is better suited to our goal in this paper---analyzing interactions among neurons across layers---we believe that the additive relevance definition may also be useful.
In particular, it could enable the decomposition of output relevance into meaningful groups of neurons, such as \emph{circuits}.
A systematic study of additive relevance and its potential applications is left for future work.

\subsubsection{NRM based on perturbation- and gradient-based XAI methods}

To apply the NRM framework to explanation paradigms beyond propagation-based methods, it suffices to specify how to compute the relevance of walks within these paradigms. Intuitively, the relevance of walks captures higher-order contributions of latent features, a concept that is also well established in perturbation- and gradient-based XAI methods.

In the perturbation-based XAI framework \cite{lund2017unified, BLUCHER2022103774}, a straightforward specification of $R(n^{(0)}, \dots, n^{(L)})$ can be obtained by perturbing all neurons except $\{n^{(1)}, \dots, n^{(L)}\}$. The perturbation of latent features can, for example, be performed as in \cite{gorbani2020neuron_shap}.

For gradient-based methods, such as Integrated Gradients \cite{sundararajan2017integrated_gradient}, the relevance of latent features has also been studied (see, e.g., \cite{dhamdhere2018how}). In this case, the walk relevance $R(n^{(0)}, \dots, n^{(L)})$ corresponds to a higher-order derivative of the model with respect to the neurons $n^{(0)}$ through $n^{(L)}$, analogous to the second-order case discussed in \cite{janizek2021integrated_hessian}.

Once the relevances of all individual walks are specified, \Cref{dfn:RelevanceMeasure} defines the relevance of all sets of walks, based on the additivity and normalization requirements, and \Cref{dfn:RelevanceNeurons} defines the joint relevance of any set of neurons. In this work, we focus on specifying walk relevance using propagation-based XAI techniques and leave the exploration and evaluation of alternative approaches for future work.

\subsection{Summary and key features of NRM framework}
\label{sec:KeyFeatures}

Our NRM framework offers an explicit and intuitive procedure for defining relevance of neurons, together with a systematic derivation of corresponding LRP algorithms, with the following key features:
\begin{itemize}
    \item The framework can be applied to any network architecture.
    \item Relevance is defined for \emph{any set of neurons}, allowing for analyzing higher-order interactions across layers.
    \item
    Relevance is defined as a signed measure of a set of walks, allowing for explicit mathematical and visual expressions, as in 
    \Cref{table:LRPinNRFramework.Small}
    and
    \Cref{fig:A.JointMarginalConditionalRelevance}, respectively.
     \item
     LRP algorithms for computing the relevance of specified neurons are systematically derived as sum-product message passing, as in \Cref{thrm:LRPasMessagePassing}. %

         \item Relevance is normalized, and thus guaranteed to be comparable across layers, which will be demonstrated in \Cref{sec:NormalizationProperty}.

\end{itemize}

We give a guide on the procedures of applying NRM on an arbitrary NN in \ref{app:procedure_nrm}.

\section{Experiments}
\label{sec:ExperimentCV}

In this section, we demonstrate the effectiveness of our NRM framework. We first apply NRM to a simple MLP trained on MNIST and conduct quantitative evaluations, showing how accurately the joint relevance captures higher-order interactions in the decision-making process. We then apply the framework to the VGG16 model trained on real-world computer vision data, demonstrating that NRM provides fine-grained and faithful explanations. Finally, we apply NRM to an RNN to illustrate its generality, as well as the importance of the normalization property when feature interactions occur across different layers.

\subsection{Baseline methods}

Since there is no existing method for quantifying the importance of interactions within arbitrary sets of neurons, we compare our NRM approach with the following intuitive baselines.

\paragraph{Standard LRP (LRP): } As a naive approach to quantifying the relevance of a set of neurons, we use the sum of marginal (first-order) relevances, i.e.,
$R^{\mathrm{marginal}}(\mcS^{(\mcL)} | n^{(L)})\propto \sum_{l \in \mcL} R(\mcS^{(l)}| n^{(L)})$. 

\paragraph{Activation (Act): } Inspired by activation based methods such as Activation Maximization  \citep{activationmax}, we use the sum of neuron activations, defined as
$R^{\mathrm{activ}}(\mcS^{(\mcL)}| n^{(L)})\propto \sum_{n\in \mcS^{(\mcL)}}A(n)$, where $A(n)$ is the activation of the neuron $n$ during the forward pass. 

\paragraph{Occlusion (Occ): } A popular approach to identifying important features is to measure the change in the model output when those features are removed. 
Specifically, we quantify the relevance of a set of neurons as $R^{\mathrm{occlusion}}
(\mcS^{(\mcL)}| n^{(L)}) \propto f_{n^{(L)}}(\bfx) - f_{n^{(L)}}^{\mathrm{neu}}(\bfx; \backslash \mcS^{(\mcL)})$, where %
$ f_{n}^{\mathrm{neu}}(\bfx;  \backslash \mathcal S)$ denotes the $n$-th output for input $\bfx$, with the neurons in the set $\mcS$ removed.

\subsection{Quantitative evaluation settings}
\label{sec:quanti_eval_setting}

In quantitative experiments, we evaluate the quality of joint relevance $R(\mcS^{(\mcL)} | n^{(L)})$ (conditioned on the output neuron) of $\mcS^{(\mcL)} = (\mcS^{(l_1)}, \ldots, \mcS^{(l_2)})$ for $0 \leq l_1 < l_2 < L$ specifying an arbitrary set of neurons across layers.  Since the joint relevance quantifies the importance of the walks passing through the neurons, we assess its quality by measuring the consistency with the \emph{joint contribution}:
\begin{align}
        C(\mcS^{(\mcL)}| n^{(L)}) 
    &= \sum_{ \mcB \subseteq \mcL } (-1)^{|\mcB|} f_{n^{(L)}}^{\mathrm{neu}} (\bfx; \backslash \mcS^{(\mcB)}),
    \label{eq:JointContributionGeneral}
\end{align}
which generalizes the interpretable feature \citep{BLUCHER2022103774,SCHNAKE2025} that directly quantifies the contribution of information flow passing through $\mcS^{(\mcL)}$ via interventions on the network output---it can be seen as a pixel-flipping procedure applied to a set of walks.
For second-order explanation, i.e., analyzing interactions between two layers, the joint contribution \eqref{eq:JointContributionGeneral} reduces to
\begin{align}
        C(&\mcS^{(\{l_1, l_2\})}|  n^{(L)}) 
    =f_{n^{(L)}}(\bfx) -f_{n^{(L)}}^{\mathrm{neu}} (\bfx; \backslash \mcS^{(l_1)}) - f_{n^{(L)}}^{\mathrm{neu}} (\bfx; \backslash \mcS^{(l_2)}) + f_{n^{(L)}}^{\mathrm{neu}} (\bfx; \backslash \mcS^{(\{l_1,l_2\})}).
    \label{eq:M.inclusion_exclusion_second_layer}
\end{align}

As an evaluation criterion, we measure the \emph{Pearson correlation} between the joint contribution and the joint relevance computed by each explanation method.
We further measure the sum of joint contributions $\mathrm{SC}_k$ over the top-$k$ most relevant neuron sets as an additional metric.
Detailed description and derivations of these evaluation measures are provided in \ref{sec:eval_methods}.

\subsection{Quantitative validation on multi-layer perceptrons}
\label{sec:Experiment.MLP}

We apply NRM to a ($L=3$)-layered MLP with $28\times28$ input, $256$ and $128$ hidden, and 10 output neurons, trained on MNIST.
We follow the default train-test split provided by \texttt{torchvison}, under which the model achieves a test accuracy of 97.71\%. Since MLP is a proper FFNN, the relevance under the NRM framework is straightforwardly defined, with the full joint relevance (conditioned on an output neuron $n^{(L)}$) expressed as $R(n^{(0)}, n^{(1)}, n^{(2)}| n^{(L)})$. We use the LRP-$0$ rule, i.e., Eq.~\eqref{eq:UnnormalizedPropagationMatrixGamma} with $\gamma=0$ and $\epsilon=0$, for defining the unnormalized propagation matrices, and quantitatively evaluate how accurately the joint relevance captures the second- and third-order joint contributions.

\Cref{tab:res_quanti} (left columns) shows the Pearson correlation between the NRM joint relevance and the joint contribution. The first column is for the second-order joint relevance $R( n^{(1)}, n^{(2)}| n^{(L)})$, while the second column is for the third-order joint relevance $R(n^{(0)}, n^{(1)}, n^{(2)}| n^{(L)})$ including the input layer.  We observe much higher correlation by the NRM joint relevance than the baseline methods.
The correlation is visualized in \ref{sec:AdditionalResults}.

\begin{table}[t]
    \centering
    \begin{tabular}{l|cc|cc}
        &MLP 2nd Order     &MLP 3rd Order      &VGG16 Channel & VGG16 Cluster    \\ \hline
\textbf{NRM}          & \textbf{0.9470} & \textbf{0.4468}    & \textbf{0.5283}     & \textbf{0.3987}      \\
LRP   & 0.4568          & 0.0103     & 0.0005                        & 0.3324          \\
Occ.       & 0.1960          & 0.0084    & 0.0429                       & 0.3438                             \\
Act.          & 0.1513          & 0.0063     & 0.0005                        & 0.1621                         
\end{tabular}
    \caption{
    Pearson correlation with the joint contribution  (higher is better). The results are averaged over the entire test set of MNIST for MLP, and 100 randomly chosen samples from ImageNet for VGG16. 
    }
    \label{tab:res_quanti}
\end{table}

\begin{figure}[t]
    \centering
    \includegraphics[width=0.5\linewidth]{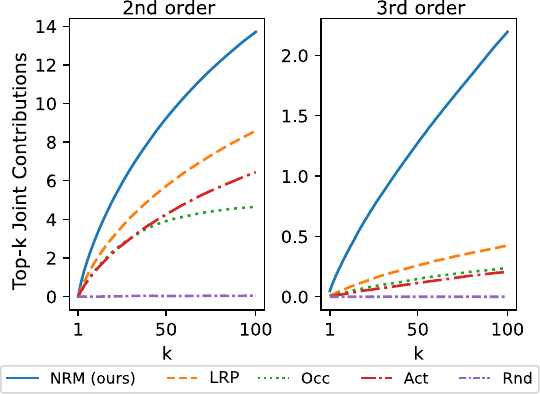}
    \vspace{-3mm}
    \caption{
        Sum of joint contributions over the top-$k$ most relevant neuron sets in MLP  %
        (higher is better), averaged over the entire MNIST test set. 
        ``Rnd'' denotes the random ranking of relevant neuron sets, serving as a reference for the chance-level performance.
    }
    \label{fig:prune_res_mnist}
\end{figure}

\Cref{fig:prune_res_mnist} depicts the second-order (left) and the third-order (right) sums of joint contributions SC$_k$ as a function of $k$.  
This confirms the superiority of the NRM joint relevance to the baselines.

\subsection{The NRM framework on VGG16}
\label{sec:Experiment.VGG16}

Next, we apply NRM to the pretrained VGG16---which is also a proper FFNN---downloaded 
from \texttt{torchvision}. 
To define the unnormalized propagation matrices, we follow \citet{ConveptRelevancePropagation} and apply the LRP-flat rule for the input layer, the LRP-$z+$ rule for the convolution layers, and the LRP-$\varepsilon$ rule for all other layers.

\subsubsection{Joint channel relevance}

We first explain VGG16 with    
the \emph{channel-level}
joint relevance, conditioned on the output neuron $n^{(L)}$ for the true class:
\begin{align}
    R(n^{(l_1)} \in \mcS^{(l_1,k_1)}, n^{(l_2)} \in \mcS^{(l_2,k_2)}, \dots | n^{(L)}),
    \label{eq:JointRelevanceChannel}
\end{align}
which is the relevance of all walks going through the chosen channel in each layer, i.e., channel $k_1$ in layer $l_1$, channel $k_2$ in layer $l_2$, and so on.
The joint relevance quantifies the importance of the interaction between the chosen channels across layers.
We also investigate which pixels in the input image activate this higher-order interaction by drawing the heatmap given by
\begin{align}
    R( n^{(0)},  n^{(l_1)} \in \mcS^{(l_1,k_1)}, n^{(l_2)} \in \mcS^{(l_2,k_2)}, \dots| n^{(L)}).
    \label{eq:JointRelevanceChannelHeatmap}
\end{align}
This is a generalization of CRP \citep{ConveptRelevancePropagation} for explaining a joint concept of a set of intermediate neurons.

\begin{figure}[t]
    \centering
\includegraphics[width=0.6\linewidth]{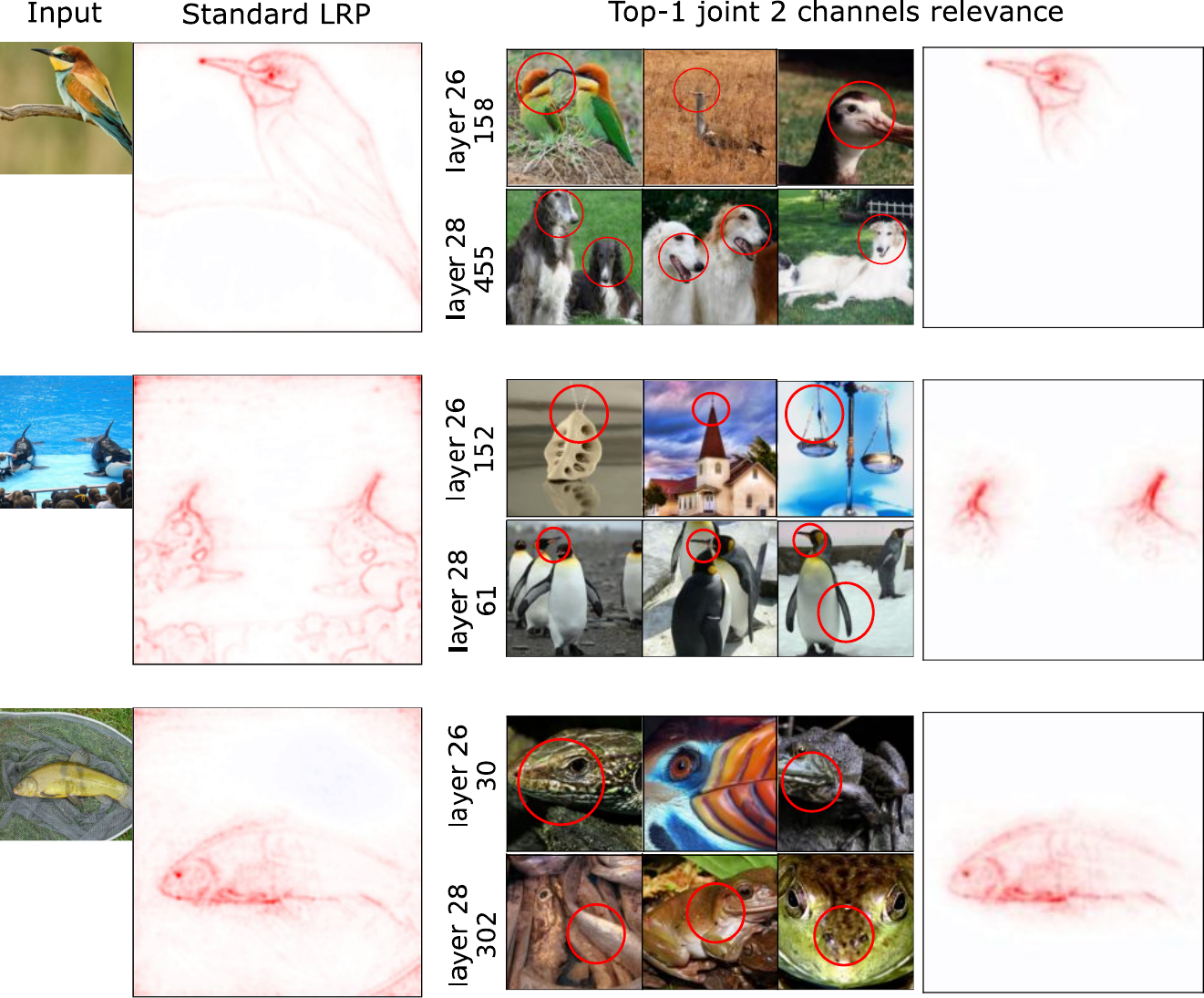}
    \vspace{-2mm}
    \caption{Channel-level joint relevance explanation on three example input images. For each row from left to right: input image, standard LRP heatmap, the set of channels in layers 26 and 28 with highest joint relevance, and the heatmap explaining the interaction.
    In the third column, first row ``Layer $a$'' and second row ``$b$'' means channel $b$ in layer $a$, and the three images shown for each channel are the top-3 RelMax images \citep{ConveptRelevancePropagation} selected from the ImageNet dataset, which illustrate the features that most strongly contribute to the relevance assigned to the channel.        %
    }
    \label{fig:joint_channel}
\end{figure}

\begin{figure}[t]
    \centering
    \includegraphics[width=0.5\linewidth]{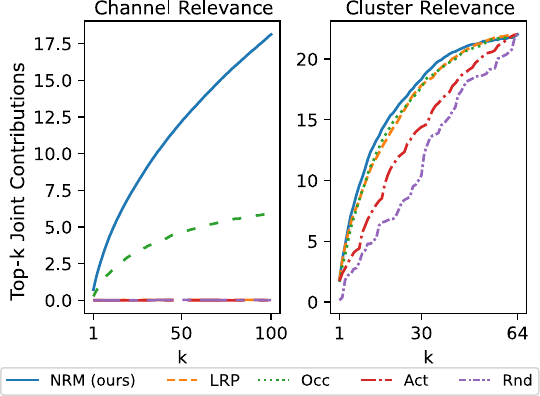}
    \vspace{-3mm}
    \caption{
            Sum of joint contributions over the top-$k$ most relevant neurons sets in VGG16
            (higher is better), averaged over 100 randomly chosen ImageNet samples.
    }
    \label{fig:prune_res}
\end{figure}

\paragraph{Qualitative results}
\Cref{fig:joint_channel} shows results of our joint relevance analysis on three example images of a bird (top), whales (middle), and a tench (bottom).  For each input image (first column), the standard LRP heatmap (second column), the set of channels with the highest joint relevance \eqref{eq:JointRelevanceChannel} in layers 26 and 28 (third column),%
\footnote{We chose those layers, following \citet{ConveptRelevancePropagation}.}
and the heatmap \eqref{eq:JointRelevanceChannelHeatmap} explaining this highest interaction (fourth column) are shown.
We observe that higher-order interactions across layers tend to focus on specific features for the target class.
Namely, from the highest relevant set of channels (third column) and the corresponding heatmap (fourth column), we can see that the model---through the most relevant second order interaction---focuses on  
bird's head (or eye), whales' fins, and fish's body, which are important features for identifying birds, whales, and tenches, respectively. 

Such information is not captured by the standard LRP heatmap (second column), highlighting the value of joint relevance analysis within the NRM framework.
Note that the hierarchical application of the concept relevance analysis \citep{ConveptRelevancePropagation} corresponds to  a greedy search for relevant interactions, which often does not yield the highest joint relevance, and thus not necessarily the most important interaction.

\paragraph{Quantitative results} 
\Cref{tab:res_quanti} (third column) and \Cref{fig:prune_res} (left) show, respectively, the Pearson correlation with the joint contribution, and the sum of joint contributions
over the top-k most relevant neuron sets.
The results indicate that our NRM joint relevance 
substantially outperforms the baseline methods.

\subsubsection{Joint cluster relevance}
\label{sec:cluster_experiment}

In VGG16, it was reported that many channels are highly correlated \citep{DBLP:journals/tnn/KauffmannERMSM24}, which can render channel-level higher-order explanation overly fine-grained.  Here, we aim to uncover bundled higher-order interactions by clustering the spatial features within each layer.
To this end, following \citet{DBLP:journals/tnn/KauffmannERMSM24},
we apply $k$-means clustering \citep{lloyd1982least} to activations in each layer.  More specifically, we treat
an intermediate feature with the shape of $\mathbb R^{h \times h\times c}$ as $h\times h$ spatial samples of $c$-dimensional vectors, and cluster them with $k= 8$.  
We denote by $\mcS^{(l,k)}$ the set of neurons belonging to cluster $k$ in layer $l$.
We analyze the joint cluster relevance in the output layers, denoted by $l_1$ and $l_2$, of blocks 1 and 5, respectively, along with the associated input-pixel explanation:
\begin{align}
  &  R(n^{(l_1)} \in \mcS^{(l_1,k_1)},  n^{(l_2)} \in \mcS^{(l_2,k_2)} | n^{(L)} ),
  &
    R(n^{(0)}, n^{(l_1)} \in \mcS^{(l_1,k_1)},  n^{(l_2)} \in \mcS^{(l_2,k_2)} | n^{(L)}).
    \notag
\end{align}

\paragraph{Qualitative results}
\Cref{fig:joint_cluster} shows the heatmap explanations of the 
cluster-level interactions in layers $l_1$ and $l_2$ corresponding to the top-5 highest joint relevance. The results indicate that the 
cluster-level second-order interactions decompose the entire information flow into distinct semantic parts. Specifically, for the gold fish image, the 1st and 3rd most relevant interactions separately focus on the upper and lower fish bodies, and the 5th focuses on the upper fish's tail and fins. The 2nd and 4th focus on the background.
For the tench image, the top-2 interactions focus on the tench body, and the rest focus on the background grass and net edge.

Again, such information is not available from standard LRP, underscoring the additional explanatory power of the NRM framework.

\paragraph{Quantitative results}

\Cref{tab:res_quanti} (fourth column) and \Cref{fig:prune_res} (right) quantitatively demonstrate the advantage of the NRM joint relevance over the baseline methods.

\begin{figure}[t]
    \centering
    \includegraphics[width=0.6\linewidth]{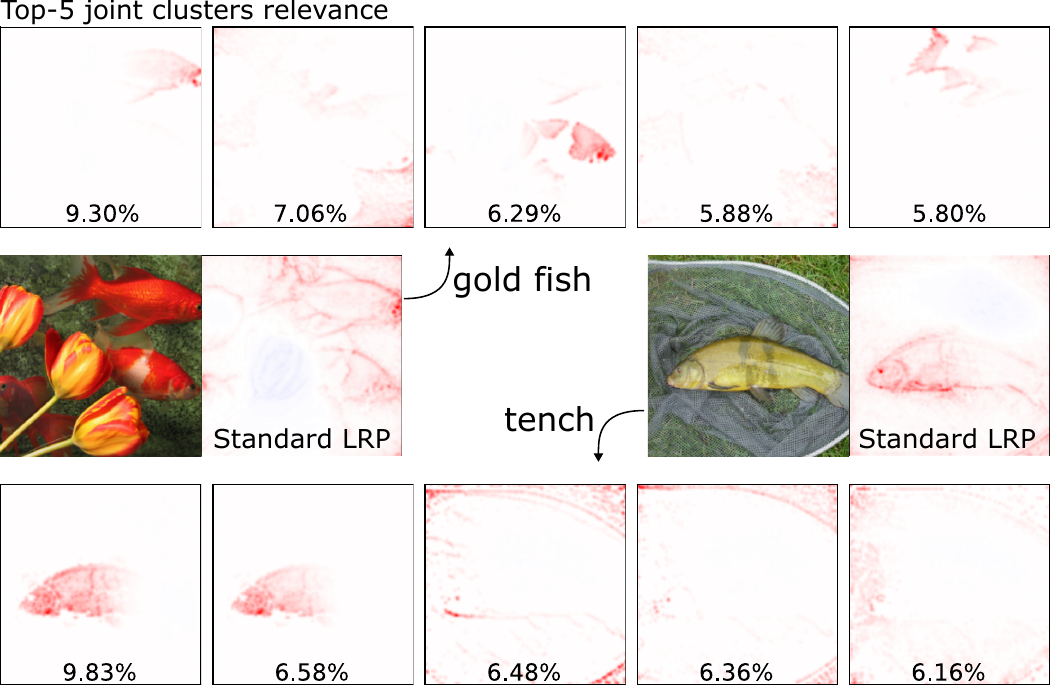}
    \vspace{-2mm}
    \caption{
    Cluster-level joint relevance analysis.
    Top:
    Heatmap explanations of the clusters in block 1 and block 5 corresponding to the top-5 (from left to right) joint cluster relevance for a goldfish image.  Middle: Input images and standard LRP heatmaps.
    Bottom: Heatmap explanations of the clusters for a tench image.
    The percentages indicate the joint relevance values.
    }
    \label{fig:joint_cluster}
\end{figure}

\subsection{Importance of normalization for cross-layer comparability in RNNs}
\label{sec:NormalizationProperty}

In our NRM framework, relevance is defined as a normalized signed measure, ensuring comparability across layers.
To demonstrate the importance of normalization,
we apply NRM and its unnormalized counterpart to an RNN, where interactions with input features occur at different layers.

We synthesize a toy time-series dataset with $T=100$ time steps for binary classification.  At each time step, the state takes one of three possible values represented as a one-hot vector, i.e., $\bfx_t \in \{ [1,0,0],[0,1,0],[0,0,1]\}$.  For all samples, the state is fixed to $\bfx_t = [1,0,0]$ at every time step except at $t=80$.  At  the time step $t=80$, the state is randomly set to $\bfx_{80}  = [0,1,0] $ or $ \bfx_{80} =[0,0,1]$ with equal probability, which determines the binary label as $0$ or $1$, respectively.
For reasonably well-trained models, the prediction should depend solely on $\bfx_{80}$, by construction of the dataset.  Accordingly, the explanation should reflect this by assigning  a significantly higher relevance  to $\bfx_{80}$ than the input at all other time steps.

We trained an RNN $f^{\text{RNN}}$ consisting of a 
linear recurrent layer with a hidden variable $\bfh_t \in \mathbb{R}^{16}$.
After all input states $\{\bfx_t\}$ are fed into the network, a fully connected network $f^{\text{predictor}}$ is applied to produce a final prediction. The forward computation is thus defined as 
\begin{align}
    \bfh_{0} &= \boldsymbol{0}, \qquad \bfh_t = f^{\text{RNN}}(\bfh_{t-1}, \bfx_{t-1}),&& t=1,\dots,100 ,
        \label{eq:RNNForwardOne}
\\
    \bfy &= f^{\text{predictor}}(\bfh_{100}) \in \mathbb{R}^{2}.
    \label{eq:RNNForwardThree}
\end{align}
The trained model achieves 100\% accuracy. 

Since an RNN is not a proper FFNN, we first convert it into a proper FFNN by using the procedure described in \ref{sec:GeneralNNAdaptation}.  This (virtual) conversion involves unfolding the recurrent structure and relocating  intermediate input neurons to the input layer, resulting in the proper FFNN representation shown in \Cref{fig:rnn_lrp_toy} (left).  Furthermore, since the converted proper FFNN contains multiple variables in each layer, we define the propagation matrices for each pair of variables in consecutive layers in an unnormalized form (like in Eq.~\eqref{eq:UnnormalizedPropagationMatrixGamma}).  These matrices are then combined into layer-wise propagation matrices as in Eq.~\eqref{eq:WholeUnnormalizedPropagationMatrix}, and jointly normalized to construct the consecutive conditional relevance \eqref{eq:ConsecutiveConditionalRelevanceContruction}.
A detailed procedure for defining the relevance is provided in 
\ref{sec:DetailsLinear}.

\begin{figure}[t]
\qquad \qquad
    \begin{minipage}{0.25\textwidth}
    \centering
    \includegraphics[width=\linewidth]{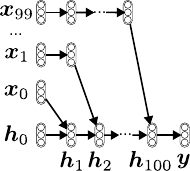}
    \end{minipage}
    \qquad \qquad
    \begin{minipage}{0.4\textwidth}
    \centering
    \includegraphics[width=\linewidth]{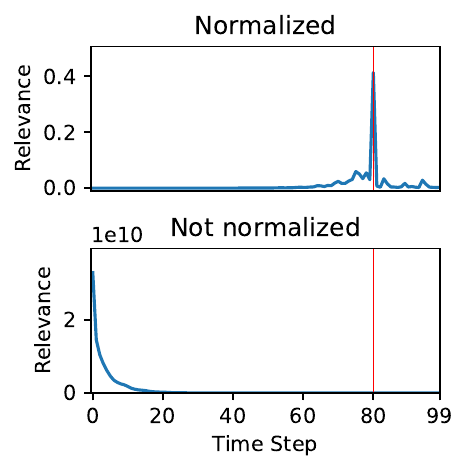}
    \end{minipage}
    \vspace{-3mm}
    \caption{ \textbf{Left}: A proper FFNN representation of RNN. \textbf{Right}: The relevance scores of input features at each time step $t$. $\bfx_{80}$ should receive the highest relevance score according to the data construction. Upper: NRM (normalized) assigns highest relevance to $\bfx_{80}$ (marked with red line). Lower: unnormalized relevance explodes in the earlier time steps. 
    }
    \label{fig:rnn_lrp_toy}
\end{figure}
To highlight the importance of normalization, we employ the LRP-$\alpha\beta$ ($\alpha=1.5, \beta=0$) rule for both properly normalized relevance and its unnormalized counterpart.
For the latter, we
directly set the consecutive conditional relevance  as $R^{\mathrm{unnorm}}(n^{(l-1)} | n^{(l)}) 
= \widetilde{T}^{(l)}_{n^{(l-1)}, n^{(l)}}$.
\Cref{fig:rnn_lrp_toy} (right) shows the relevance $R(n^{(0)} \in \mcS^{(0)}_{\bfx_t}| n^{(L)})$ of the input features  explaining the true class output neuron $n^{(L)}$ as a function of the time step $t$, where $\mcS^{(0)}_{\bfx_t}$ denotes the set of neurons representing the input feature $\bfx_t$.
We observe that 
NRM
with normalized relevance
yields a reasonable explanation, attributing  substantially higher relevance to the input feature $\bfx_{80}$ than to the others (see \Cref{fig:rnn_lrp_toy}, right top). 
In contrast, the unnormalized counterpart exhibits relevance explosion in the early time steps (see \Cref{fig:rnn_lrp_toy}, right bottom), which occurs because interactions between individual input features and the hidden variable take place at different time steps, correspondingly, at different layers in the proper FFNN representation (see \Cref{fig:rnn_lrp_toy} left).  In such cases, the total mass of relevance should remain constant over time to ensure comparability between features.  
This simple toy example illustrates the importance of normalization, which the NRM framework guarantees through its relevance construction procedure.

\section{Conclusion}
\label{sec:Conclusion}

Despite rapid advances in XAI, explaining complex models with hierarchical architectures remains a significant challenge. Higher-order explanations are likely essential for revealing the inner workings of such models.
In this paper,
we have proposed a novel Normalized Relevance Measure (NRM) framework for systematically defining the \emph{relevance of any set of neurons}, and deriving the corresponding layer-wise relevance propagation (LRP) algorithms.
Grounded in measure theory, this framework integrates principled mathematical constructions and inference techniques for defining and computing relevance, thereby guaranteeing the comparability of relevance scores across layers.

The NRM framework is applicable to a broad class of inference models. We demonstrated its versatility by applying it to an MLP trained on MNIST, a pretrained VGG16 model on ImageNet, and an RNN trained on synthetic time-series data.
Through these experiments, 
we demonstrated that joint relevance analysis within the NRM framework yields fine-grained explanations that elucidate higher-order inner workings of the network---information not accessible through standard LRP. These results highlight the potential of our approach for achieving fine-grained and faithful~XAI.

In future work, we plan to apply the NRM framework to a wider range of complex architectures. A particular focus will be on generative models, where capturing salient information flow and translating it into human-interpretable explanations are especially challenging.
We envision that the flexibility of the NRM framework will facilitate the exploration of diverse attribution strategies for internal model components, thereby helping to identify effective ways of extracting meaningful information.
Such algorithms would substantially enhance the usability of the NRM framework and further advance the field of XAI.

\section*{CRediT authorship contribution statement}
Ping Xiong: Writing – review \& editing, Writing – original draft,
Visualization, Validation, Software, Methodology, Investigation, Formal analysis, Conceptualization; 
Thomas Schnake: Writing – review \& editing, Writing – original draft, Visualization, Methodology, Investigation, Formal analysis, Conceptualization; 
Gr\'{e}goire Montavon:
Writing – review \& editing, Methodology, Conceptualization;
Klaus-Robert M\"{u}ller: Writing – review \& editing, Supervision, Resources, Methodology, Funding acquisition, Conceptualization; 
Shinichi Nakajima:
Writing – review \& editing, Writing – original draft, Supervision, Project
administration, Methodology, Investigation, Formal analysis, Conceptualization.

\section*{Declaration of competing interest}
The authors declare that they have no known competing financial
interests or personal relationships that could have appeared to influence
the work reported in this paper.

\section*{Acknowledgments}
We thank Jacob Kauffmann for helpful discussions on the cluster-level joint relevance analysis. 
This work was funded by the German Ministry for Education and Research as BIFOLD - Berlin Institute for the Foundations of Learning and Data (ref. BIFOLD25B). 
Thomas Schnake is a postdoctoral fellow at the University of Toronto in the Eric and Wendy Schmidt AI in Science Postdoctoral Fellowship Program, a program of Schmidt Sciences.
Klaus-Robert M\"{u}ller was partly supported by the Institute of Information \& Communications Technology Planning \& Evaluation (IITP) grant funded by the Korea government (MSIT) (No. RS-2019-II190079, Artificial Intelligence Graduate School Program, Korea University) and grant funded by the Korea government (MSIT, No. RS-2024-00457882, AI Research Hub Project), and Hector Fellow Academy.

\bibliographystyle{abbrvnat}
\bibliography{main.bib}

\FloatBarrier
\clearpage

\appendix
\setcounter{figure}{0}
\renewcommand{\thetheorem}{\Alph{section}.\arabic{theorem}}

\allowdisplaybreaks

\section{Generalization to Arbitrary Architectures}
\label{sec:GeneralNNAdaptation}

In \Cref{sec:NormalizedRelevanceFramework},
we described the NRM framework for a proper FFNNs---a FFNN with no skip connection and no intermediate input nor output.
However, NRM can be applied to any network with arbitrary architecture by converting it into a proper FFNN.
This conversion is only for defining the relevance structure, and therefore can be virtual, i.e., no need to modify the network forward process.

We first provide a precise definition of proper FFNNs, which depends not only on the network architecture but also on the (unnormalized) propagation matrices, defined in \Cref{sec:WalkRelevanceSpecification}.
We refer to a neuron as a \emph{source} neuron if it emits relevance but does not receive any, and as a \emph{drain} neuron if it receives relevance but does not emit any.
The following theorem holds:
\begin{theorem}
\label{thrm:TheoremSourceDrain}
    Any $L$-layered FFNN with unnormalized propagation matrices $\{\widetilde{\bfT}^{(l)}\}_{l=1}^L$ and an unnormalized output relevance $\widetilde{\bfr}^{(\ol)}$ satisfying positive-sum conditions \eqref{eq:PositiveSumConditionPropagationMatrix} and \eqref{eq:PositiveSumConditionOutputRelevance} has source neurons only in the output layer, and drain neurons only in the input layer.
\end{theorem}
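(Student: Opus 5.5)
We would formalize ``emits'' and ``receives'' relevance via the propagation matrices. Relevance flows backward: a neuron $n'$ in layer $l$ emits relevance to layer $l-1$ through the column $\widetilde{T}^{(l)}_{\cdot, n'}$, and a neuron $n$ in layer $l-1$ receives relevance from layer $l$ through the row $\widetilde{T}^{(l)}_{n, \cdot}$. Accordingly, we say $n'$ in layer $l\ge 1$ emits relevance iff $\sum_{n} \widetilde{T}^{(l)}_{n, n'} \neq 0$. An output neuron receives relevance iff it is assigned output relevance, which is expressed through $\widetilde{\bfr}^{(\ol)}$. An intermediate or input neuron $n$ in layer $l-1$ receives relevance iff some entry $\widetilde{T}^{(l)}_{n, n'}$ is nonzero. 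The proof then reduces to checking each layer against these definitions.

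\emph{Step 1 (no source outside the output layer).} Take a neuron $n'$ in layer $l$ with $0 \le l \le L-1$, and suppose it emits relevance; for $l\ge 1$ this is the column condition above. We must show it also receives relevance. Since $l+1 \le L$, the positive-sum condition \eqref{eq:PositiveSumConditionPropagationMatrix} for layer $l+1$ gives $\sum_{n'' } \widetilde{T}^{(l+1)}_{n', n''}$ over the relevant row direction. This is exactly where care is needed. The condition \eqref{eq:PositiveSumConditionPropagationMatrix} only fixes column sums, so we have to phrase ``receives'' correctly. We would instead argue contrapositively: if $n'$ receives nothing from layer $l+1$, then every walk through $n'$ has zero relevance. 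By the normalized construction \eqref{eq:ConsecutiveConditionalRelevanceContruction}, $n'$ would then receive no message, and emitting it would violate conservation. A source in the intended sense is a neuron whose column in $\widetilde{\bfT}^{(l)}$ is normalizable, i.e.\ whose conditional relevance $R(\cdot\,|\,n')$ is a well-defined distribution, but which has no incoming edge. Under this reading, what the argument actually needs is that every neuron in layers $1,\dots,L$ has a well-defined outgoing conditional, and this is guaranteed by \eqref{eq:PositiveSumConditionPropagationMatrix}.

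\emph{Step 2 (no drain outside the input layer).} Take a neuron $n'$ in layer $l\ge1$. By \eqref{eq:PositiveSumConditionPropagationMatrix}, $\sum_n \widetilde{T}^{(l)}_{n,n'}>0$, so its column is not identically zero and $n'$ emits relevance to layer $l-1$. Hence $n'$ cannot be a drain. Drains can therefore only be in layer $0$. Input neurons have no outgoing propagation matrix, so they are indeed drains whenever they receive relevance.

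\emph{Step 3 (sources only at the output).} Relevance enters the whole system only through $\widetilde{\bfr}^{(\ol)}$, which satisfies \eqref{eq:PositiveSumConditionOutputRelevance}. Every other neuron's incoming relevance is, by the message recursion of \Cref{thrm:LRPasMessagePassing}, a combination of messages from the layer above. So no neuron in layers $0,\dots,L-1$ injects relevance that was not received. Output neurons, having no layer above, are the only possible injection points, i.e.\ sources.

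The main obstacle is Step 1. The conditions \eqref{eq:PositiveSumConditionPropagationMatrix} and \eqref{eq:PositiveSumConditionOutputRelevance} constrain only column sums, so the definitions of ``emit'' and ``receive'' must be pinned down, via the message recursion, so that the statement holds. In the intended reading this is essentially by construction: the source/drain classification follows from the fact that the propagation matrices only connect consecutive layers, while the positive-sum conditions ensure that no intermediate neuron silently absorbs relevance.
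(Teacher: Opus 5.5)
Your Steps 2 and 3 are essentially the paper's argument. Because each column of $\widetilde{\bfT}^{(l)}$, $l=1,\dots,L$, normalizes into $R(\cdot\,|\,n^{(l)})$, every neuron in layers $1,\dots,L$ passes on exactly the relevance it receives in the message recursion; since relevance is injected only at layer $L$ and flows backward, sources can sit only in the output layer and drains only in the input layer.

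Step 1 is circular as written and should be dropped, since Step 3 already covers that claim. Your worry there is well founded: the column-sum conditions do not force a nonzero row of $\widetilde{\bfT}^{(l+1)}$, so a purely structural reading of ``receives'' is not controlled by the hypotheses. That is exactly why the statement, like the paper's proof, has to be read in the message-flow sense you adopt in Step 3.
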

\begin{proof}
Under the positive-sum conditions \eqref{eq:PositiveSumConditionPropagationMatrix} and \eqref{eq:PositiveSumConditionOutputRelevance}, the joint normalized relevance can be defined and decomposed as Eq.~\eqref{eq:JointRelevanceDecomposition}.
Since all consecutive conditional relevances $\{\Rel(n^{(l-1)} | n^{(l)})\}_{l=1}^L$ are normalized for all condition neurons $n^{(l)} \in \{1, \ldots, N^{(l)}\}$,
exactly the same amount of relevance received in any neuron in the $l$-th layer is emitted to neurons in the $(l-1)$-th layer for $l=1, \ldots, L$.   This means that no source nor drain exists in the intermediate layers.  Since relevance is back-propagated, only the output and input layers may have source and drain neurons, respectively.
\end{proof}

The contraposition of 
\Cref{thrm:TheoremSourceDrain}
implies that we cannot design the propagation matrices 
for FFNNs with source or drain neurons in intermediate layers.  Also skip connections formally breaks the Markov property.  Accordingly, we define the proper FFNNs as follows:
\begin{definition} (Proper FFNNs)
\label{dfn:ProperFFNN}
    An FFNN is called proper if it has no skip connections, has source neurons only in the output layer, and has drain neurons only in the input layer.
\end{definition}
In the relevance back-propagation, the source neurons correspond to the output neurons as starting points, and the drain neurons correspond to the input neurons as the final receivers of relevance.  Therefore, Definition \ref{dfn:ProperFFNN} requires that all output and input neurons must be in the output and input layers, respectively.%
\footnote{
\Cref{dfn:ProperFFNN} allows us to ignore certain neurons---for instance, those corresponding to bias terms in multi-layer perceptrons (MLPs)---by propagating no relevance to it nor from it.  Those neurons are omitted from our notation.
}
Below, we explain how to convert an arbitrary NN into a proper FFNN without changing the forward computation (and thus the input-output relationship).

\begin{figure}
    \centering
   \includegraphics[width=\linewidth]{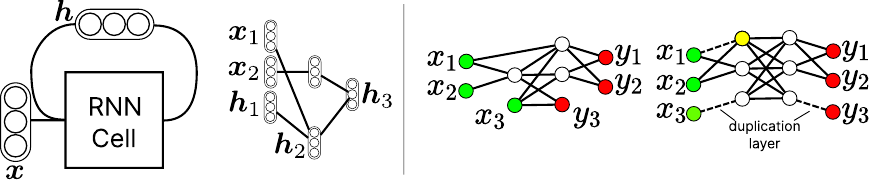}
   
    \caption{  Virtual conversion of general NNs to proper FFNNs.  
     Left: RNN with input series $\bfx_1,\bfx_2$ and hidden states $\bfh_i$ can be unfolded and transformed to a proper FFNN.  
     Right: 
    Duplicating neurons for removing skip connection ($x_1$), 
    intermediate input ($x_3$), and intermediate output ($y_3$). 
    The virtually added identity forward connections are depicted as dashed lines. 
    After adaptation, the input neurons (green) are all in the first layer, the output neurons (red) are all in the last layer, and no skip connection exists.
    }
    \label{fig:rnn_skip_to_proper_ffnn}
\end{figure}

\paragraph{Recurrent Neural Networks (RNNs)}
We can simply unfold an RNN by considering RNN cell as separate layers as shown in \Cref{fig:rnn_skip_to_proper_ffnn} (left).

\paragraph{Skip connections}
For a skip connection from layer $\underline{l}$ to layer $\overline{l}$, we simply copy the corresponding neurons at layer $\underline{l}$ to the layers $l = \underline{l}+1, \ldots, \overline{l}-1$ with the identity forward connection (see \Cref{fig:rnn_skip_to_proper_ffnn} right).

\paragraph{Intermediate inputs and outputs}
For the output neurons in the $\overline{l}$-th intermediate layer, we copy the corresponding neurons to the layers $l = \overline{l}+1, \ldots, \ol$ with the identity forward connections.
For the input neurons in the $\underline{l}$-th intermediate layer, we move it to the input layer ($l=0$), and then copy the corresponding neurons to the layers $l = 1, \ldots, \underline{l}$ with the identity forward connections
(see \Cref{fig:rnn_skip_to_proper_ffnn} right).

\section{Relevance Definitions and Relevance Laws}
\label{sec:A.RelevanceDefinisionAndLaws}
\Cref{tab:relevance_comp_descr_def} summarizes the relevance definitions and relevance laws introduced in \Cref{sec:NormalizedRelevanceFramework}.
\begin{table*}[t]
    \center
    \renewcommand{\arraystretch}{1.3}
    \caption{
    Relevance definitions.} 
    \label{tab:relevance_comp_descr_def}
    \begin{tabular}{K{.22\textwidth}K{.35\textwidth}K{.17\textwidth}|K{.21\textwidth}}
        \toprule
          \multirow{1}{*}& \multirow{1}{*}{\textbf{Relevance of neurons} }& \multicolumn{2}{l}{\textbf{Relevance of corresponding walks} } \\
          \hline
        \emph{Walk (full joint)} & $ R( \bfn )$ &  \multicolumn{2}{l}{$R^{\mathrm{Walk}}(\{\bfn \})$} \\
        \emph{Partial walk} & $R(\bfn^{(\mcL)}) =   \sum_{n^{(l)} \in \mathbb{N}^{(l)} \forall l \notin \mcL  }    R(\bfn)$ &  \multicolumn{2}{l}{$R^{\mathrm{Walk}}(\{\bfm \in \mcW: \bfm^{(\mcL)} = \bfn^{(\mcL )}\})$}  \\
        \emph{Single neuron (marginal)} & $R(n^{(l_1)})= \sum_{n^{(l)} \in \mathbb{N}^{(l)} \forall l \notin \{l_1\}  }    R(\bfn)$ &  \multicolumn{2}{l}{$R^{\mathrm{Walk}}(\{\bfm \in \mcW: m^{(l_1)} = n^{(l_1 )}\})$ }  \\
        \emph{Two neurons (joint)} & $R(n^{(l_1)}, n^{(l_2)})= \sum_{n^{(l)} \in \mathbb{N}^{(l)} \forall l \notin \{l_1, l_2\}  }    R(\bfn)$ &  \multicolumn{2}{l}{$R^{\mathrm{Walk}}(\{\bfm \in \mcW: (m^{(l_1)}, m^{(l_2)}) = (n^{(l_1 )}, n^{(l_2 )})\})$ }  \\
        \emph{Arbitrary Neuron Set} & $R(\mcS^{(\mcL)}) = \sum_{\bfn^{(\mcL)} \in \mcS^{(\mcL)}}R(\bfn^{(\mcL)})$ & \multicolumn{2}{l}{ $R^{\mathrm{Walk}}(\{\bfm \in \mcW: \bfm^{(\mcL)} \in \mcS^{(\mcL )}\})$ }  \\
        \emph{Conditional}, $\mcL_1 \cap \mcL_2 = \emptyset$ & \multicolumn{3}{l}{$ R(\bfn^{(\mcL_1)} | \bfn^{(\mcL_2)})=R(\bfn^{(\mcL_1 \cup \mcL_2)}) / R( \bfn^{(\mcL_2)})$ if $R(\bfn^{(\mcL_2)}) \neq 0$, else $0$} \\
        \bottomrule
        \emph{Complementary law} & \multicolumn{3}{l}{$R(\mcS^{(l)} ) = 1- R( \overline{\mcS}^{(l)} )$, where $ \overline{\mcS}^{(l)} = \mathbb{N}^{(l)} \setminus \mcS^{(l)}  $ } \\
        \emph{Multiplication law}  & \multicolumn{3}{l}{$ R(n^{(l_1)}, n^{(l_2)}) = R(n^{(l_1)}| \, n^{(l_2)}) R(n^{(l_2)}) $ }\\
        \emph{Addition law}  & \multicolumn{3}{l}{$R(\mcS_1^{(l)} \cup \mcS_2^{(l)}) = R( \mcS_1^{(l)} ) +R( \mcS_2^{(l)} )  - R(\mcS_1^{(l)} \cap \mcS_2^{(l)}) $ }        \\
        \bottomrule
    \end{tabular}
\end{table*}

\section{Proof of \Cref{thrm:LRPasMessagePassing}}
\label{sec:A.ProofLRPasMessagePassing}

It holds that
\begin{align}
    R(\mcS^{(\mcL)})
& = \sum_{ \bfn \in \mathbb{W} :\, n^{(l)} \in \mcS^{(l)} \forall l \in \mcL} R(\bfn),
\notag\\
&= 
\sum_{n^{(0)} \in \widetilde{\mcS}^{(0)}} 
\!\!\!\!
\cdots 
\!\!\!\!
\sum_{n^{(L)} \in \widetilde{\mcS}^{(L)}} 
\left(
  \prod_{l=1}^{L}
R(n^{(l-1)} | n^{(l)})
\right)
R(n^{(L)})
    \notag \\
&= 
 \underbrace{ 
\sum_{n^{(0)} \in \mathbb{N}^{(0)}} 
\!\!
\cdots 
\!\!\!\!\!\!
\sum_{n^{(l_1-1)} \in \mathbb{N}^{(l_1-1)}} 
\left(
  \prod_{l=1}^{l_1}
R(n^{(l-1)} | n^{(l)})
\right)
}_{=1}
\notag\\
& \hspace{5mm}
\sum_{n^{(l_1)} \in \widetilde{\mcS}^{(l_1)}} 
\Bigg(
\sum_{n^{(l_1+1)} \in \widetilde{\mcS}^{(l_1+1)}} 
\!\!\!\!\!\!
R(n^{(l_1)} | n^{(l_1+1)})
\notag\\
&\hspace{5mm} \qquad  \cdots 
 \underbrace{ 
\sum_{n^{(L)} \in \widetilde{\mcS}^{(L)}} 
R(n^{(L-1)} | n^{(L)})
\msgbel{L}_{n^{(L)}}
 }
_{= 
\msgbel{L-1}_{n^{(L-1)}}}
\Bigg)
\notag\\
&= 
\sum_{n^{(l_1)} \in \widetilde{\mcS}^{(l_1)}} 
\Bigg(
\sum_{n^{(l_1+1)} \in \widetilde{\mcS}^{(l_1+1)}} 
\!\!\!\!\!\!
R(n^{(l_1)} | n^{(l_1+1)})
\notag\\
&\hspace{5mm} \qquad  \cdots 
\!\!\!\!\!\!
 \underbrace{ 
\sum_{n^{(L-1)} \in \widetilde{\mcS}^{(L-1)}} 
\!\!\!\!\!\!
R(n^{(L-2)} | n^{(L-1)})
\msgbel{L-1}_{n^{(L-1)}}
 }
_{= 
\msgbel{L-2}_{n^{(L-2)}}}
\Bigg)
\notag\\
&=\cdots
\notag\\
&=
\!\!\!\!
\sum_{n^{(l_1)} \in \widetilde{\mcS}^{(l_1)}} 
\!\!
\Bigg(
 \underbrace{ 
\sum_{n^{(l_1+1)} \in \widetilde{\mcS}^{(l_1+1)}} 
\!\!\!\!\!\! \!\!
R(n^{(l_1)} | n^{(l_1+1)})
\msgbel{l_1+1}_{n^{(l_1+1)}}
 }
_{= 
\msgbel{l_1}_{n^{(l_1)}}}
\Bigg)
\notag\\
&=
\sum_{n^{(l_1)} \in \widetilde{\mcS}^{(l_1)}}
\msgbel{l_1}_{n^{(l_1)}},
\notag
\end{align}
where we used the normalization property of the conditional relevance $R(n^{(l)} | n^{(l+1)})$ in the third equation,
and recursively replace the last summation in the parenthesis with the corresponding message in the last equations.
\QED

\section{Proof of \Cref{thrm:EquivalenceToOriginalWalkRelevance}}
\label{sec:A.ProofWalkRelevanceRelation}

If we define the full joint relevance by the normalized version \eqref{eq:A.JointRelevanceAndWalkRelevance} of the original walk relevance definition \eqref{eq:OriginalWalkRelevance},
the conditional relevance of the neuron $n^{(l-1)}$ (for $l = 1, \ldots, L$) on the subsequent layers can be written as
\begin{align}
  & R(n^{(l-1)} | n^{(l)}, \ldots, n^{(L)})
  \notag\\
  &\hspace{3mm}= 
   \sum_{n^{(0)}} \cdots \sum_{n^{(l-2)}}
  \left(\prod_{l'=1}^{L}   \widetilde{T}^{(l')}_{n^{(l'-1)}, n^{(l')}} \right)  \widetilde{r}^{(\ol)}_{n^{(\ol)}}
  \notag\\
  &\hspace{6mm} \Bigg\{
    \sum_{n^{(0)}} \cdots \sum_{n^{(l-2)}} \sum_{n'^{(l-1)}}
  \left(\prod_{l'=1}^{l-2}  \widetilde{T}^{(l')}_{n^{(l'-1)}, n^{(l')}} \right) 
  \widetilde{T}^{(l-1)}_{n^{(l-2)}, n'^{(l-1)}} 
  \notag\\
  & \hspace{20mm}
  \widetilde{T}^{(l)}_{n'^{(l-1)}, n^{(l)}} 
    \left(\prod_{l'=l+1}^{L}  \widetilde{T}^{(l')}_{n^{(l'-1)}, n^{(l')}} \right) 
\widetilde{r}^{(\ol)}_{n^{(\ol)}}  
    \Bigg\}^{-1}
  \notag\\
  &\hspace{3mm}= \textstyle
  \frac{
  \left\{ \prod_{l'=1}^{l-1}  \fconst{l'}  \right\}  
  \widetilde{T}^{(l)}_{n^{(l-1)}, n^{(l)}}\widetilde{r}^{(\ol)}_{n^{(\ol)}}
  }
  {
  \left\{ \prod_{l'=1}^{l-1}  \fconst{l'}  \right\}  
   \sum_{n'^{(l-1)}}
  \widetilde{T}^{(l)}_{n'^{(l-1)}, n^{(l)}}\widetilde{r}^{(\ol)}_{n^{(\ol)}}
  }\notag\\
  &\hspace{3mm}= \textstyle\frac{   \widetilde{T}^{(l)}_{n^{(l-1)}, n^{(l)}}  }{  \sum_{n'^{(l-1)}}    \widetilde{T}^{(l)}_{n'^{(l-1)}, n^{(l)}}   }
 = R(n^{(l-1)} | n^{(l)}),
 \label{eq:A.MarkovProperty_1}
  \end{align}
where
\begin{align}
    C^{(l')} =\textstyle \sum_{n^{(l'-1)}}
\widetilde{T}^{(l')}_{n^{(l'-1)}, n^{(l')}}
\notag
\end{align}
 does not depend on $n^{(l')}$ under the constant-sum condition \eqref{eq:A.ConstantSumConditionPropagationMatrix}.
The Markov property, shown as Eq.~\eqref{eq:A.MarkovProperty_1},
allows us to decompose the joint relevance as 
  \begin{align}
 \Rel(n^{(\il)}, \ldots, n^{(\ol)}) & =\textstyle \left(\prod_{l=1}^{L}  \Rel(n^{(l-1)} | n^{(L)}) \right)   R(n^{(\ol)}) ,
 \label{eq:A.DefJointDecompositionProbabilityForm_1}
 \end{align}
where
 \begin{align}
 R(n^{(\ol)})
 &=
 \sum_{n^{(0)}} \cdots \sum_{n^{(L-1)}}
\left(\prod_{l'=1}^{L}   \widetilde{T}^{(l')}_{n^{(l'-1)}, n^{(l')}} \right)  \widetilde{r}^{(\ol)}_{n^{(\ol)}}
\notag\\
 & \hspace{3mm}
\Bigg\{
\sum_{n^{(0)}} \cdots \sum_{n^{(L-1)}} \sum_{n'^{(L)}}
\left(\prod_{l'=1}^{L}   \widetilde{T}^{(l')}_{n^{(l'-1)}, n^{(l')}} \right)  
\notag\\
 & \hspace{20mm}
\widetilde{T}^{(L)}_{n^{(L-1)}, n'^{(L)}} 
\widetilde{r}^{(\ol)}_{n'^{(\ol)}}
\Bigg\}^{-1}
\notag\\
  &= \textstyle
  \frac{
  \left\{ \prod_{l'=1}^{L}  \fconst{l'}  \right\}  
\widetilde{r}^{(\ol)}_{n^{(\ol)}}
  }
  {
  \left\{ \prod_{l'=1}^{L}  \fconst{l'}  \right\}  
   \sum_{n'^{(L)}}
\widetilde{r}^{(\ol)}_{n'^{(\ol)}}
  }
   = 
  \frac{
\widetilde{r}^{(\ol)}_{n^{(\ol)}}
  }
  {
   \sum_{n'^{(L)}}
\widetilde{r}^{(\ol)}_{n'^{(\ol)}}
  }.
 \label{eq:A.RelationNormalizedCase}
 \end{align}
 Eqs.\eqref{eq:A.MarkovProperty_1}--\eqref{eq:A.RelationNormalizedCase} exactly match the relevance definitions in Eqs.\eqref{eq:ConsecutiveConditionalRelevanceContruction}, \eqref{eq:JointRelevanceDecomposition}, and \eqref{eq:OutputRelevanceContruction}, respectively, in the NRM framework, 
 which completes the proof.
 \QED

\section{Original LRP Algorithm and Common LRP Rules}
 \label{sec:A.CommonLRPRules}

In the original standard LRP \citep{bach2015pixel,DBLP:journals/pieee/SamekMLAM21}, the \emph{initial} unnormalized output relevance $\widetilde{\bfr}^{(\ol)} \in \mathbb{R}^{\numneu{\ol}}$ is set according to the output $\bff(\bfx)$ of the FFNN to be explained, e.g., as $\widetilde{\bfr}^{(\ol)}  = 
f_{n^{(\ol)}}(\bfx)\bfone_{n^{(\ol)}} $
or $\widetilde{\bfr}^{(\ol)}  = \bfone_{n^{(\ol)}} $,
where $\bfone_{n}  \in \mathbb{R}^N$ for $n \in \{ 1, \ldots, N\}$ is the one-hot-vector, i.e., the $n$-th entry is one and the other entries are zero.  
Then, the output relevance $\widetilde{\bfr}^{(\ol)}$ is propagated as
\begin{align}
  \prelvec{l-1} & = \widetilde{\bfT}^{(l)}   \prelvec{l} \qquad  \mbox { for } \qquad  l = L, \ldots, 1, 
    \label{eq:A.LRP_rule}
    \end{align}
where
    $\prelvec{l} \in \mathbb{R}^{\numneu{l}}$ is the unnormalized  relevance at the $l$-th layer.
The unnormalized propagation matrices $\{\widetilde{\bfT}^{(l)}\}_{l=1}^L$ are typically set in (approximately) normalized forms, e.g.,
\begin{align}
& \varepsilon \mbox{ rule: }
&
 \widetilde{T}^{(l)}_{n, n'}& = \textstyle
\frac{ h^{(l-1)}_{n} W^{(l)}_{n, n'} }{\varepsilon + \sum_{n''} {h^{(l-1)}_{n''} W^{(l)}_{n'', n' } }},
\label{eq:A.UnnormalizedPropagationMatrixEpsilon} \\
& \alpha\beta \mbox{ rule: }
&
 \widetilde{T}^{(l)}_{n, n'}& = \textstyle
 \alpha
\frac{ \max(0, h^{(l-1)}_{n} W^{(l)}_{n, n'} )}{ \sum_{n''} \max(0, h^{(l-1)}_{n''} W^{(l)}_{n'', n' } )}
- \beta
\frac{ \max(0, - h^{(l-1)}_{n} W^{(l)}_{n, n'} )}{ \sum_{n''} \max(0,- h^{(l-1)}_{n''} W^{(l)}_{n'', n' } )},
\label{eq:A.UnnormalizedPropagationMatrixAlphaBeta}\\
& \gamma \mbox{ rule: }
&
 \widetilde{T}^{(l)}_{n, n'}
 &= 
 \textstyle
\frac{ h^{(l-1)}_{n} W^{(l)\uparrow}_{n, n'} }{\sum_{n''} {h^{(l-1)}_{n''} W^{(l)\uparrow}_{n'', n' } }},
\label{eq:A.UnnormalizedPropagationMatrixGammaNormalized} 
\end{align}
where $ W^{(l)\uparrow}_{n, n'} = W^{(l)}_{n, n'} + \gamma \max(0, W^{(l)}_{n, n'})$.

Most of the existing LRP rules, including the LRP-$\gamma$ rule \eqref{eq:A.UnnormalizedPropagationMatrixGammaNormalized}, satisfy the normalization condition \eqref{eq:A.NormalizedConditionPropagationMatrix} for the propagation matrices.
Furtheremore, although the output relevance is commonly not normalized, its normalization only affects the absolute scale, which is not essential in typical use cases of LRP, e.g., for depicting heat-maps and ranking relevant features.  Therefore, we can additionally assume that the output relevance is normalized so that Eq.~\eqref{eq:A.NormalizedConditionOutputRelevance} holds.
In this case, 
the consecutive conditional relevance, the output relevance, and the full joint relevance in our NRM framework coincide with the propagation matrix, the output relevance, and the original walk relevance, respectively (see Eq.
\eqref{eq:A.EquivalanceNormalizedWalkRelevance}).

 Some LRP rules, including the LRP-$\alpha \beta$ rule \eqref{eq:A.UnnormalizedPropagationMatrixAlphaBeta} for $\beta \ne \alpha - 1$, do not satisfy the normalization condition \eqref{eq:A.NormalizedConditionPropagationMatrix}, but still satisfy the \emph{constant-sum} condition \eqref{eq:A.ConstantSumConditionPropagationMatrix}, which requires that all columns of the propagation matrix $\widetilde{\bfT}^{(l)}$ sum up to the same positive constant $C^{(l)}$.  Still in this case, the full joint relevance in the NRM framework matches the original definition of the walk relevance up to a scaling factor, i.e., the joint relevance in the NRM framework corresponds to the normalized version of the original walk relevance \citep{schnake2020higher} (see Eq.~\eqref{eq:A.JointRelevanceAndWalkRelevance}).

A few LRP rules, including the LRP-$\varepsilon$ rule \eqref{eq:A.UnnormalizedPropagationMatrixEpsilon}, do not even satisfy  the constant-sum condition \eqref{eq:A.ConstantSumConditionPropagationMatrix}.  In such cases, we cannot define the joint relevance by normalizing the original definition of the walk relevance as in Eq.~\eqref{eq:A.JointRelevanceAndWalkRelevance}, because this breaks the backward Markov property---the basic property that justifies the layer-wise back-propagation of the LRP algorithms.
This is the reason why we defined the relevance function on the sets of walks that correspond to the consecutive conditional relevances \eqref{eq:ConsecutiveConditionalRelevanceContruction} and the output marginal relevance \eqref{eq:OutputRelevanceContruction} in \Cref{sec:WalkRelevanceSpecification}, instead of defining the relevance function on the set of all single walks.
With our walk relevance specification  procedure in \Cref{sec:WalkRelevanceSpecification}, we can consistently define the relevance measure space for most general propagation matrices and output relevance vectors that satisfy the weakest requirements---the postive-sum conditions \eqref{eq:PositiveSumConditionPropagationMatrix} and \eqref{eq:PositiveSumConditionOutputRelevance}.

\section{Recasting Existing LRP Algorithms within NRM Framework}
\label{sec:A.RalationToLRPAlgorithms}

LPR algorithms are typically defined as back-propagation algorithms with additional procedures, e.g., masking neurons in some layers and stopping propgation at an intermediate layer.  By recasting existing LRP algorithms within the NRM framework, we here identify the target relevance quantities they compute.
In most common LRP rules, the propagation matrices are (approximately) normalized, while the output relevance is not.  However, the scaling of the output relevance affects only the absolute scale, which is not important when, e.g., depicting heat-maps and ranking the relevant features. Accordingly, we here assume the normalization conditions \eqref{eq:A.NormalizedConditionPropagationMatrix} and \eqref{eq:A.NormalizedConditionOutputRelevance}.

\begin{figure}[!t]
    \centering
   \includegraphics[width=0.8\linewidth, trim={0 0 287 0}, clip]{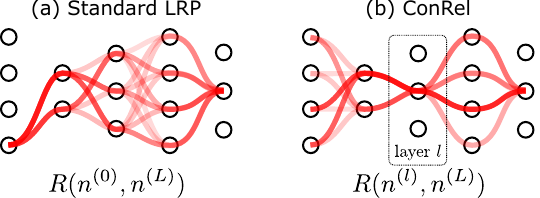}
   \includegraphics[width=0.8\linewidth, trim={287 0 0 0}, clip]{img/RelationExisitingLRP.pdf}
    \caption{
    Existing LRP methods visualized as the relevance of sets of walks to be computed.  (a) The relevance of an input neuron $n^{(\il)}$ for explaining an output neuron $n^{(\ol)}$.  (b) ConRel, which is the relevance of an intermediate \emph{concept} neuron $n^{(l)}$ for explaining an output neuron $n^{(\ol)}$.  This is also the objective to be maximized for RelMax.  (c) CRP, which is the relevance of an input neuron $n^{(\il)}$ for explaining an intermediate neuron $n^{(l)}$ and the output neuron $n^{(\ol)}$. (d) Subgraph relevance.  The square indicates the GNN part.
    }
    \label{fig:A.RelationExistingLRP}
\end{figure}

\paragraph{Standard LRP (to explain output with input relevances)}

In the standard LRP algorithm \citep{bach2015pixel}, to compute the relevances of the input neurons $n^{(\il)} \in \{1, \ldots, N^{(\il)}\}$ with respect to a chosen output neuron $n^{(\ol)}$, 
the propagated relevance by the LRP algorithm \eqref{eq:A.LRP_rule} corresponds to the sum-product message passing \eqref{eq:A.LRPasMP}, and it holds that
$\widetilde{\bfr}^{(l)}
 \propto
\msgb{l}$.
Therefore, if the relevance at the output layer is initialized as $\widetilde{\bfr}^{(\ol)} = \bfone_{n^{(\ol)}}$,
the final quantity that the LRP algorithm computes corresponds to 
the conditional relevance, i.e., for $n^{(\il)} \in \{1, \ldots, N^{(\il)}\}$,
\begin{align}
\mathrm{LRP}_{n^{(\il)}| n^{(\ol)}}^{(\mathrm{input} \leftarrow \mathrm{output})}
&=
\widetilde{r}_{n^{(\il)}}^{(\il)}
 =
R(n^{(\il)}|  {n}^{(\ol)}).
\label{eq:A.PropagatedRelevanceMarginalRelevanceInitialOne}
\end{align}
Alternatively, if the output relevance is initialized as $\widetilde{\bfr}^{(\ol)}  
= R(n^{(\ol)}) \bfone_{n^{(\ol)}}$,
the LRP algorithm computes
the joint relevance, i.e.,
\begin{align}
\mathrm{LRP}_{n^{(\il)}, n^{(\ol)}}^{(\mathrm{input} \leftarrow \mathrm{output})}
&=
\widetilde{r}_{n^{(\il)}}^{(\il)}
=
R(n^{(\il)}|  {n}^{(\ol)})
R(n^{(\ol)}) \notag \\
&=
R(n^{(\il)},  {n}^{(\ol)})  .
\label{eq:A.PropagatedRelevanceMarginalRelevanceInitialOutput}
\end{align}
\Cref{fig:A.RelationExistingLRP} (a) illustrates the standard LRP \eqref{eq:A.PropagatedRelevanceMarginalRelevanceInitialOutput} as the relevance of a set of walks.

\paragraph{CRP and RelMax} \citet{ConveptRelevancePropagation} proposed Concept Relevance Propagation (CRP) and Relevance Maximization (RelMax) for assessing the importance of intermediate neurons.  The proposed procedure consists of the following steps:
\begin{enumerate}
\item 
\label{stp:CRPRelMax.FindIntNeuron}
For a target output neuron $n^{(\ol)}$, i.e., the target class in classification, identify \emph{concept neurons}, i.e., relevant intermediate neurons $\{n^{(l)}\}$ in some layer $l \in \{1, \ldots, L-1\}$.
\item 
\label{stp:CRPRelMax.RelMax}
For each concept neuron $n^{(l)}$, find input samples that maximize the concept relevance.  This process is called Relevance Maximization (RelMax).

\item 
\label{stp:CRPRelMax.CRP}
With respect to each concept neuron $n^{(l)}$ and an output neuron $n^{(\ol)}$, compute the input relevances.  This algorithm is called CRP.
\end{enumerate}
The concept relevance (ConRel) computed in Step \ref{stp:CRPRelMax.FindIntNeuron} corresponds to the joint relevance
\begin{align}
\mathrm{ConRel}_{n^{(l)}, n^{(\ol)}}
&=
R(n^{(l)}  , n^{(\ol)}).
\label{eq:A.ConRel}
\end{align}
The quantity \eqref{eq:A.ConRel} is also the objective function to be maximized with respect to the input sample $\bfx \in \mcX$ in Step \ref{stp:CRPRelMax.RelMax}, where $\mcX$ is a set of samples.  Namely, RelMax performs
\begin{align}
\widehat{\bfx}^{\mathrm{RelMax}}
= \argmax_{\bfx \in \mcX}
R_{\bfx}(n^{(l)} , n^{(\ol)})
\label{eq:A.RelMax}
\end{align}
to identify the input for which the relevance of the concept $ n^{(l)}$ for the output $n^{(\ol)}$ is highest.
The conditional relevance computed in Step \ref{stp:CRPRelMax.CRP} corresponds to
\begin{align}
\mathrm{CRP}_{n^{(\il)} , n^{(l)}, n^{(\ol)}}
&=
R(n^{(\il)}, n^{(l)} ,  n^{(\ol)}).
\label{eq:A.CRP}
\end{align}
\Cref{fig:A.RelationExistingLRP} (b) and (c) illustrate ConRel \eqref{eq:A.ConRel} (or the objective of RelMax)  and CRP \eqref{eq:A.CRP}, respectively.

\paragraph{Relevance of set of neurons}
We are often interested in the relevance of a set of neurons, e.g., the relevance of a patch or superpixel---specified by a set $\mcS^{(\il)}$ of input neurons---in the input image, and the relevance of a channel including all spatial locations---specified by a set $\mcS^{(l)}$ of intermediate neurons---of a convolutional layer output.  In such cases, the standard LRP, ConRel, and CRP, respectively, correspond to
\begin{align}
\mathrm{LRP}_{\mcS^{(\il)}, n^{(\ol)}}^{(\mathrm{input} \leftarrow \mathrm{output})}
&=
R(n^{(\il)}\in \mcS^{(\il)},  {n}^{(\ol)})  ,
\notag\\
\mathrm{ConRel}_{\mcS^{(l)}, n^{(\ol)}}
&=
R(n^{(l)} \in \mcS^{(l)}, n^{(\ol)}),
\notag\\
\mathrm{CRP}_{\mcS^{(\il)} , \mcS^{(l)}, n^{(\ol)}}
&=
R(n^{(\il)} \in \mcS^{(\il)}, n^{(l)} \in \mcS^{(l)},  n^{(\ol)}).
\notag
\end{align}

\paragraph{Subgraph relevance:}
To explain GNNs, \citet{schnake2020higher} proposed the subgraph relevance as the sum of relevances of the walks that stay in the node embedding neurons in a given subgraph.
This corresponds to the substructure relevance \eqref{eq:A.SubstructureRelevance} conditioned on the output neuron $n^{(\ol)}$:
\begin{align}
\mathrm{SubgLRP}_{\mcS^{(\underline{l}:\overline{l})}| n^{(\ol)} }
&=
    R(n^{(\underline{l}:\overline{l})} \in 
 \mcS^{(\underline{l}:\overline{l})} | n^{(\ol)})
 \label{eq:A.SubgraphRelevance} 
\end{align}
for the
substructure $\mcS^{(\underline{l}:\overline{l})} = (\mathcal{S}, \ldots, \mathcal{S})$, where $\mathcal{S}$ corresponds to the node embedding neurons included in a given subgraph within the GNN layers $l = \underline{l}, \ldots, \overline{l}$.  
\Cref{fig:A.RelationExistingLRP} (d) illustrates the subgraph relevance \eqref{eq:A.SubgraphRelevance}.

\begin{table*}[t]
\caption{NRM expression of the target relevance quantities that existing LRP algorithms compute. 
}
\vspace{-4mm}
\center
\begin{tabular}{K{.42\textwidth}K{.52\textwidth}}
\toprule
LRP Algorithm
& Target relevance quantity \\
\midrule
Standard LRP & $ R(n^{(\il)},  {n}^{(\ol)})$ \\
ConRel & $ R(n^{(l)}, n^{(\ol)})$ \\
RelMax & $\argmax_{\bfx \in \mcX} R_{\bfx}(n^{(l)}, n^{(\ol)})$\\
CRP & $R(n^{(\il)}, n^{(l)},  n^{(\ol)})$\\
\midrule
Standard LRP for input neuron set & $R(n^{(\il)}\in \mcS^{(\il)},  {n}^{(\ol)})$\\
ConRel for concept neuron set & $R(n^{(l)} \in \mcS^{(l)}, n^{(\ol)})$\\
RelMax for concept neuron set  & $\argmax_{\bfx \in \mcX} R_{\bfx}(n^{(l)} \in \mcS^{(l)}, n^{(\ol)})$\\
CRP for input and concept neuron sets & $ R(n^{(\il)} \in \mcS^{(\il)}, n^{(l)} \in \mcS^{(l)},  n^{(\ol)})$\\
\midrule
Subgraph LRP & $R(n^{(\underline{l}:\overline{l})} \in \mcS^{(\underline{l}:\overline{l})} | n^{(\ol)})$\\
SymbXAI for set $\mcS$  & $  1 - R(n^{(\underline{l}:\overline{l})} \in \overline{\mathcal{S}}^{(\underline{l}:\overline{l})}| n^{(\ol)})$\\
 SymbXAI for formula $\mcS_1 \land \mcS_2$ 
 &
$1 
        +R(n^{(\underline{l}:\overline{l})} \in \overline{\mathcal{S}_1 \cup \mathcal{S}_2}^{(\underline{l}:\overline{l})}| n^{(\ol)}) 
        \qquad  \qquad $
   ${ }\;\; - R(n^{(\underline{l}:\overline{l})} \in \overline{\mathcal{S}_1}^{(\underline{l}:\overline{l})}| n^{(\ol)}) 
    - R(n^{(\underline{l}:\overline{l})} \in \overline{\mathcal{S}_2}^{(\underline{l}:\overline{l})}| n^{(\ol)}) 
    $
\\
SymbXAI for formula $ \neg \mcS$  & $R(n^{(\underline{l}:\overline{l})} \in \overline{\mathcal{S}}^{(\underline{l}:\overline{l})}| n^{(\ol)})$\\
\bottomrule
\end{tabular}
\label{table:LRPinNRFramework}
\end{table*}

\paragraph{Symbolic XAI (SymbXAI):}  
\citet{SCHNAKE2025} extended the attribution of relevance from a set of single features to more complex relationships between features, which are expressed as \emph{logical formulas of subgraphs}. The framework is called Symbolic XAI (SymbXAI).
When SymbXAI is applied to a GNN using LRP as the base explanation method, the subgraph relevance  \eqref{eq:A.SubgraphRelevance} serves as the basic quantity.  
For the set $\mathcal{S}$ of node embedding neurons of a given subgraph and its complement $\overline{\mcS}$,
the SymbXAI relevance of the subset $\mathcal{S}$ is defined as  
\begin{align}
\mathrm{SymbXAI}_{\mcS | n^{(\ol)} }
&=
   1 - R(n^{(\underline{l}:\overline{l})} \in \overline{\mathcal{S}}^{(\underline{l}:\overline{l})}| n^{(\ol)}),
 \label{eq:A.fo_symbxai} 
\end{align}
where 
$\overline{\mathcal{S}}^{(\underline{l}:\overline{l})} = (\overline{\mathcal{S}},  \dots, \overline{\mathcal{S}})$.
Since the second term is the subgraph relevance \eqref{eq:A.SubgraphRelevance} for the complement $\overline{\mcS}$,
SymbXAI \eqref{eq:A.fo_symbxai} quantifies the relevance of all walks that visit at least one neuron inside the subset $\mcS$.
SymbXAI also quantifies the relevance of subsets connected by the logical conjunction, i.e., the \textrm{AND} operator denoted by~$\wedge$,
as well as the logical negation, i.e., the \textrm{NOT} operator denoted by~$\neg$.
The conjunction should follow the \emph{inclusion-exclusion principle}---a basic concept in the measure theory. Namely, 
the SymbXAI relevance of the conjunction $\mathcal{S}_1 \wedge \mathcal{S}_2$ for two sets $\mathcal{S}_1$ and $\mathcal{S}_2$ is defined as  
\begin{align}
    \mathrm{SymbXAI}_{\mcS_1 \land \mcS_2| n^{(\ol)} } 
    &\!=\! \mathrm{SymbXAI}_{\mcS_1| n^{(\ol)} } \!+ \mathrm{SymbXAI}_{ \mcS_2| n^{(\ol)} } \notag \\
     & \qquad - \mathrm{SymbXAI}_{\mcS_1 \cup \mcS_2| n^{(\ol)} }
    \notag\\
    &= 1 
        +R(n^{(\underline{l}:\overline{l})} \in \overline{\mathcal{S}_1 \cup \mathcal{S}_2}^{(\underline{l}:\overline{l})}| n^{(\ol)}) 
    \notag\\ 
    & \qquad
    - R(n^{(\underline{l}:\overline{l})} \in \overline{\mathcal{S}_1}^{(\underline{l}:\overline{l})}| n^{(\ol)}) \notag \\
    & \qquad - R(n^{(\underline{l}:\overline{l})} \in \overline{\mathcal{S}_2}^{(\underline{l}:\overline{l})}| n^{(\ol)})  .
    \label{eq:A.and_symbxai}
\end{align}
Eq.~\eqref{eq:A.and_symbxai} quantifies the relevance of all walks that visit at least one neuron inside the subset $\mcS_1$ and also at least one neuron inside the subset $\mcS_2$. 
The SymbXAI relevance for 
the negation---for assessing the relevance of \emph{feature absence}---%
is defined as  
\begin{align}
    \mathrm{SymbXAI}_{ \neg \mcS| n^{(\ol)} } 
    &=  1 -   \mathrm{SymbXAI}_{ \mcS| n^{(\ol)} } 
    \notag\\
    &= R(n^{(\underline{l}:\overline{l})} \in \overline{\mathcal{S}}^{(\underline{l}:\overline{l})}| n^{(\ol)}),
    \label{eq:A.not_symbxai}
\end{align}
which quantifies the relevance of all walks that never visit any neuron inside the subset $\mcS$.
The SymbXAI relevance definitions \eqref{eq:A.and_symbxai} and \eqref{eq:A.not_symbxai} 
determine the relevance 
of any logical formula 
due to the functional completeness of the \textrm{AND} and \textrm{NOT} operators---%
any logical formula involving also other logical connectives, e.g., \textrm{OR} and \textrm{XOR},
can be transformed to a formula only with \textrm{AND} and \textrm{NOT} operators.  
The SymbXAI formulation is consistent with 
the \emph{Harsanyi dividends} \citep{Harsanyi+1959+325+356}---a basic concept in cooperative game theory---and allows for incorporating desirable properties into the explanation method.

\medskip

\Cref{table:LRPinNRFramework} summarizes the target quantities that existing LRP algorithms compute.
Note the significant difference between the subgraph relevance \citep{schnake2020higher}
and the SymbXAI relevance \citep{SCHNAKE2025} for a given subgraph $\mcS$.  This difference can be  intuitively understood by expressing the target quantities as conditional substructure relevances \eqref{eq:A.SubgraphRelevance} and \eqref{eq:A.fo_symbxai}, respectively, within our NRM framework.

\section{Details of Additive Relevance}
\label{sec:A.DetailsAdditiveRelevance}

\subsection{Proof of \Cref{thrm:Properties.AdditiveRelevance}}

It holds that
\begin{align}
    R^{\mathrm{Add}}(\emptyset) &=\sum_{\bfn \in \mathbb{W}} R^{\mathrm{Walk}}(\bfn) \phi(\bfn, \emptyset) \notag \\
    &= \sum_{\bfn \in \mathbb{W}} R^{\mathrm{Walk}}(\bfn)\frac{\sum_{n \in \bfn} \mathbbm{1} (n \in \emptyset )}{|\bfn|} \notag \\
    &= 0,
    \notag \\
    R^{\mathrm{Add}}(\mathbb{S}) &=\sum_{\bfn \in \mathbb{W}} R^{\mathrm{Walk}}(\bfn) \phi(\bfn, \mathbb{S}) \notag \\
    &= \sum_{\bfn \in \mathbb{W}} R^{\mathrm{Walk}}(\bfn)\frac{\sum_{n \in \bfn} \mathbbm{1} (n \in \mathbb{S} )}{|\bfn|} \notag \\
    &= \sum_{\bfn \in \mathbb{W}} R^{\mathrm{Walk}}(\bfn) \notag \\
    &= 1,
    \notag \\
    R^{\mathrm{Add}}(\mcS_1 \cup \mcS_2) &=\sum_{\bfn \in \mathbb{W}} R^{\mathrm{Walk}}(\bfn) \phi(\bfn, \mcS_1 \cup \mcS_2) \notag \\
    &= \sum_{\bfn \in \mathbb{W}} R^{\mathrm{Walk}}(\bfn)\frac{\sum_{n \in \bfn} \mathbbm{1} (n \in \mcS_1 \cup \mcS_2) }{|\bfn|} \notag \\
    &= \sum_{\bfn \in \mathbb{W}} R^{\mathrm{Walk}}(\bfn)\frac{\sum_{n \in \bfn} [\mathbbm{1} (n \in \mcS_1) + \mathbbm{1} (n \in \mcS_2)] }{|\bfn|} \notag \\
    &= \sum_{\bfn \in \mathbb{W}} R^{\mathrm{Walk}}(\bfn)\frac{\sum_{n \in \bfn} \mathbbm{1} (n \in \mcS_1)  }{|\bfn|} 
     + \sum_{\bfn \in \mathbb{W}} R^{\mathrm{Walk}}(\bfn)\frac{\sum_{n \in \bfn} \mathbbm{1} (n \in \mcS_2)}{|\bfn|} \notag  \\
     &=R^{\mathrm{Add}}(\mcS_1)+R^{\mathrm{Add}}(\mcS_2).  && \qed \notag
\end{align}

\subsection{Numerical comparison with joint relevance}

\Cref{fig:comp_def_neuron_set} compares the features captured by the additive (middle column) and joint (right column) relevances of different sets of neurons (left column) for an example image input.
(see \ref{sec:A.SettingFigureAdditiveJointComparison} for the experimental set up).  
The figure demonstrates that additive explanations exhibit limited sensitivity to the choice of subgraph $\mathcal{S}$. This lack of sensitivity likely arises from the additive measure being too accommodating, as it accumulates the contribution of walks that only partially traverse the subgraph. 
In contrast, the joint relevance integrates walk relevance only when the walk remains confined within the given subgraph, achieving a more sensitive attribution model.

\begin{figure}[t]
    \centering
    \includegraphics[width=0.67\linewidth]{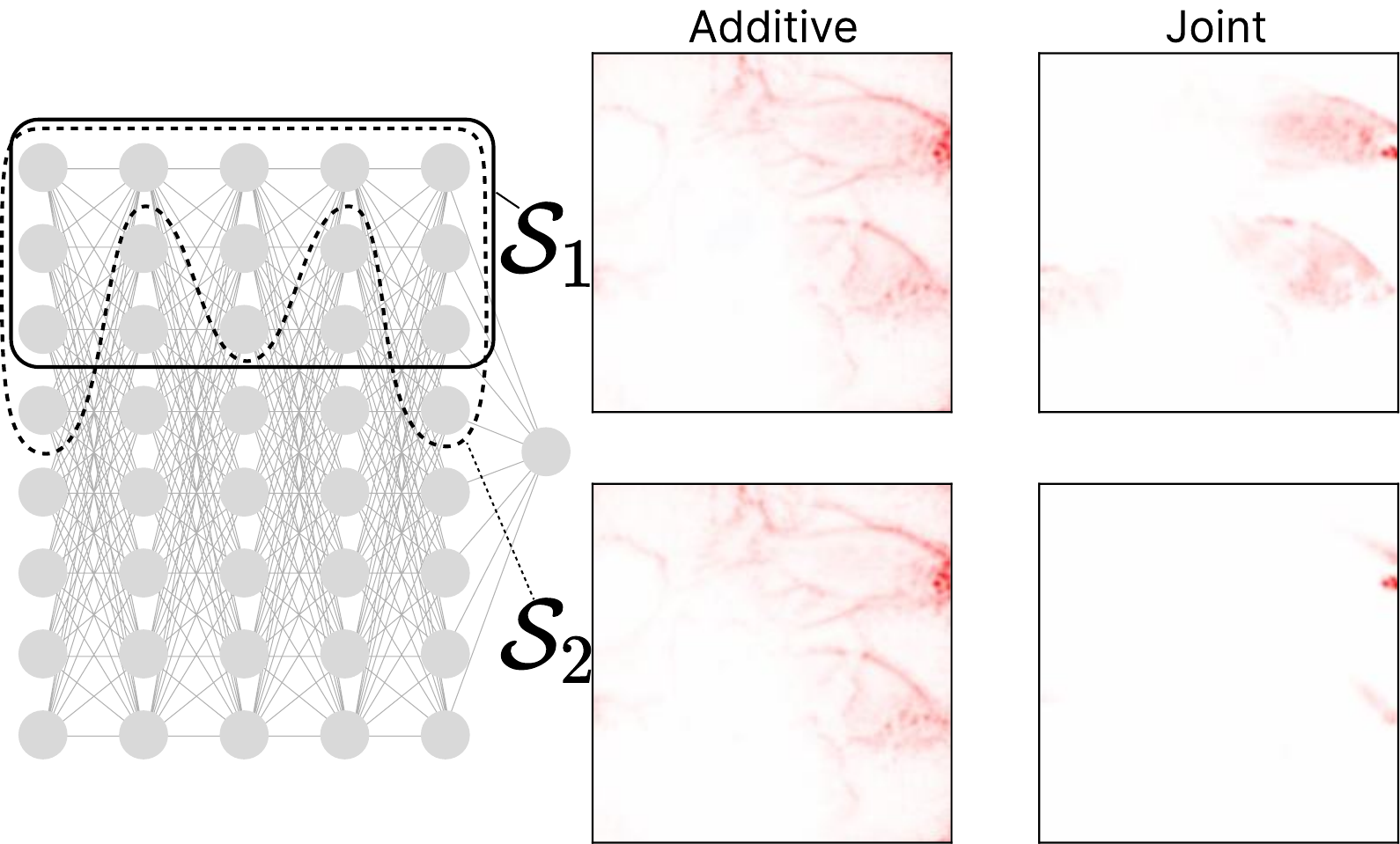}
    \vspace{-3mm}
    \caption{
    Features captured by additive (middle column) and joint (right column) relevances of  different sets of neurons (left column) over five layers 
    in VGG16.  
    }
    \label{fig:comp_def_neuron_set}
\end{figure}

\subsection{Setting for depicting \Cref{fig:comp_def_neuron_set}}
\label{sec:A.SettingFigureAdditiveJointComparison}

To compare the two definitions of neuron set relevance, we conducted the clustering experiment as in \Cref{sec:cluster_experiment}, and explained two sets of neurons, illustrated in \Cref{fig:comp_def_neuron_set} (left).
The middle and right columns
show the heatmaps produced by the additive (\Cref{def:AdditiveRelevanceDefinition}) and the joint (\Cref{dfn:RelevanceNeurons}) relevance definitions, respectively. 
The top row shows the features captured by the neuron set $\mcS_1$ concisting of the first 3 clusters in the output layer of each of the five blocks of VGG16.  The bottom row, on the other hand, shows the features captured by the neuron set $\mcS_2$ consisting of a different selection of clusters in each layer (see the illustration on the left).

\section{Procedure for Applying NRM}
\label{app:procedure_nrm}

Here we summarize the procedure for explaining a given network and input sample in the NRM framework.
\begin{enumerate}
    \item If the given network is not proper FFNN, convert it into a proper FFNN, following the procedure in \ref{sec:GeneralNNAdaptation}.
    \item Feed the input sample, and then compute the unnormalized relevance propagation matrices $\{\widetilde{\bfT}^{(l)}\}_{l=1}^L$ and output relevance $\widetilde{\bfr}^{(L)}$, 
    from which the consecutive conditional relevance \eqref{eq:ConsecutiveConditionalRelevanceContruction} and the output relevance \eqref{eq:OutputRelevanceContruction} are computed.       
    \item Choose a set of neurons $\mcS^{(\mcL)}$ to which we attribute, and the target neuron $n^{(\overline{l})}$ to be explained for $\overline{l} > \max(\mcL)$ (typically $\overline{l}  = L$).
    \item \label{stp:LRPComputation}
    Compute the relevance $R(\mcS^{(\mcL)}, n^{(\overline{l})})$ or $R(\mcS^{(\mcL)} | n^{(\overline{l})})$ by the sum-product message passing algorithm (\Cref{thrm:LRPasMessagePassing}).
    \item If we search for most relevance information flow, iterate Step \ref{stp:LRPComputation} for different set $\mcS^{(\mcL)}$ of neurons.
\end{enumerate}
With this procedure, one can analyze the decision-making mechanism of any type of NN with arbitrarily high-order explanation.

\section{Evaluation Methods}
\label{sec:eval_methods}

This appendix provides details of our evaluation metrics.

\subsection{Joint contribution}

The joint relevance $R(\mcS^{(\mcL)}| n^{(L)})$ under the NRM framework attributes the information flow passing through a specified set of neurons $\mcS^{(\mcL)}$ across layers $\mcL \subseteq \mathbb{L}$.
Here, we provide details of the \emph{joint contribution} \eqref{eq:M.inclusion_exclusion_generalNeuronsLevel}, an intervention-based metric for evaluating the quality of joint relevance.

One can evaluate the contribution of the information flow passing through a single neuron $n$ by 
\begin{align}
A(\{n\}| n^{(L)}) = f_{n^{(L)}}^{\mathrm{neu}} (\bfx) - f_{n^{(L)}}^{\mathrm{neu}} (\bfx; \backslash \{n\}),
\notag
\end{align}
which evaluates the difference in network output with and without the removal of $n$.
Similarly, we can evaluate the contribution of the information flow passing through \emph{at least one neuron} in the set $\mcS$ by 
\begin{align}
A(\mcS| n^{(L)}) = f_{n^{(L)}}^{\mathrm{neu}} (\bfx) - f_{n^{(L)}}^{\mathrm{neu}} (\bfx; \backslash \mcS).
\notag
\end{align}

For evaluating the contribution of the information flow passing through 
\emph{at least one neuron in $\mcS^{{(l)}}$ for all $l \in \mcL$}---to which the joint  relevance
$R(\mcS^{(\mcL)}| n^{(L)}) = R(n^{(l_1)} \in \mcS^{(l_1)}, \ldots, n^{(l_{|\mcL|})} \in \mcS^{(l_{|\mcL|})}| n^{(L)})$ is attributed---we employ the inclusion-exclusion principle \citep{Rota1964}:
\begin{align}
       C(\mcS^{(\mcL)}| n^{(L)}) &= \sum_{ \mcB \subseteq \mcL } (-1)^{|\mcB| + 1} A(\mcS^{(\mcB)}| n^{(L)}), 
       \end{align}
       which can be rewritten as
\begin{align}
C(\mcS^{(\mcL)}| n^{(L)}) 
        &=\!\! \sum_{ \mcB \subseteq \mcL } 
        \!\!
        (-1)^{|\mcB| + 1} \left(f_{n^{(L)}}^{\mathrm{neu}} (\bfx) - f_{n^{(L)}}^{\mathrm{neu}} (\bfx; \backslash \mcS^{(\mcB)})\right) \notag\\
        &= \underbrace{\sum_{ k=0 }^{|\mcL |} \binom{|\mcL |}{k} (-1)^{k + 1}}_{=-(1-1)^{|\mcL|} = 0} f_{n^{(L)}}^{\mathrm{neu}} (\bfx)
        \notag \\
        &\qquad \qquad + \sum_{ \mcB \subseteq \mcL  } (-1)^{|\mcB|}f_{n^{(L)}}^{\mathrm{neu}} (\bfx;  \backslash \mcS^{(\mcB)}) \notag\\
        &= \sum_{ \mcB \subseteq \mcL } (-1)^{|\mcB|} f_{n^{(L)}}^{\mathrm{neu}} (\bfx; \backslash \mcS^{(\mcB)}).
        \label{eq:M.inclusion_exclusion_generalNeuronsLevel}
\end{align}
The joint contribution \eqref{eq:M.inclusion_exclusion_generalNeuronsLevel} generalizes the notion of \emph{interpretable features}, which explain the interacting contribution of two features \citep{BLUCHER2022103774,gorbani2020neuron_shap,SCHNAKE2025}.

\subsection{Evaluation metrics} 

Based on the joint contribution introduced above,
we define two evaluation metrics, Pearson correlation and the sum of top-$k$ joint contributions.
The Pearson correlation is simply the correlation between the relevance measure and the joint contribution. Higher values indicate better performance.

To calculate the sum of top-$k$ joint contribution, we first identify the top-$k$ most relevant information flows by explanation methods, and then compute the sum of the joint contributions over them:
\begin{align}
    \text{SC}_k 
    &= \sum_{i=1}^k  C((\mcS^{(\mcL)})_i| n^{(L)}) ,
\end{align}
where $(\mcS^{(\mcL)})_i$ denotes the $i$-th most relevant combination of (neuron-wise, channel-wise or cluster-wise) neuron partitions over the layers $\mcL$. Higher values indicate better performance.

\begin{figure*}[t]
    \centering
    \centering
     \begin{subfigure}[b]{\textwidth}
         \centering
         \includegraphics[width=\linewidth]{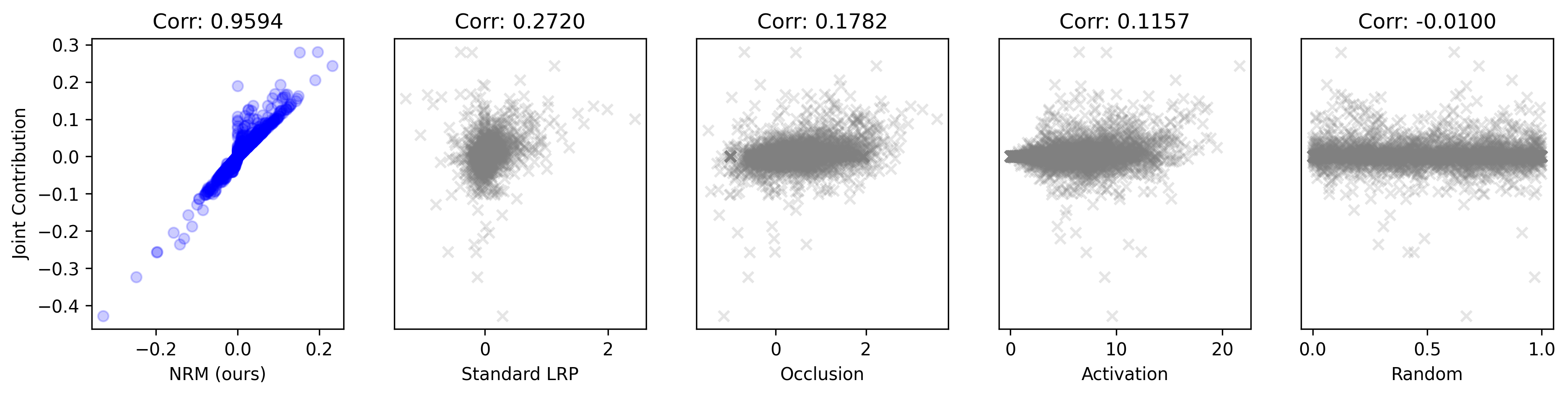}
         \caption{
         Second-order joint relevance $R(n^{(1)}, n^{(2)}| n^{(L)})$ vs. second-order joint contribution $C(n^{(1)}, n^{(2)}| n^{(L)})$.
         }
         \label{fig:corr_scatter_mnist_2nd_order}
     \end{subfigure}
     \hfill
     \begin{subfigure}[b]{\textwidth}
         \centering
         \includegraphics[width=\linewidth]{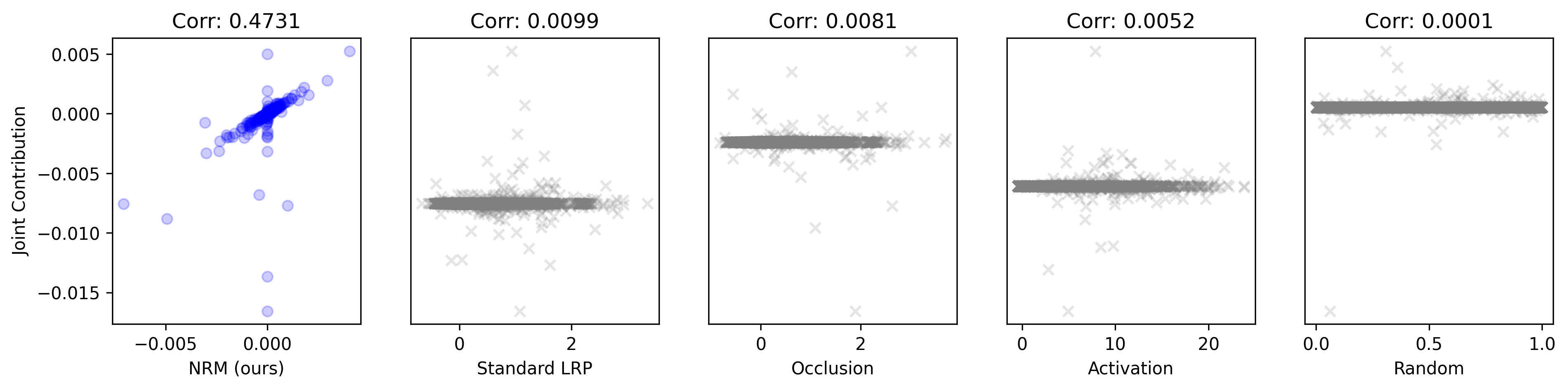}
         \caption{
         Third-order joint relevance $R(n^{(0)}, n^{(1)}, n^{(2)}| n^{(L)})$ vs. third-order joint contribution $C(n^{(0)}, n^{(1)}, n^{(2)}| n^{(L)})$.
         }
         \label{fig:corr_scatter_mnist_3rd_order}
     \end{subfigure}
    \caption{
        Correlation between joint relevance and joint contribution for MLP on a single test sample of MNIST.
    }
    \label{fig:corr_scatter_mnist}
\end{figure*}

\begin{figure*}[t]
    \centering
    \centering
     \begin{subfigure}[b]{\textwidth}
         \centering
         \includegraphics[width=\linewidth]{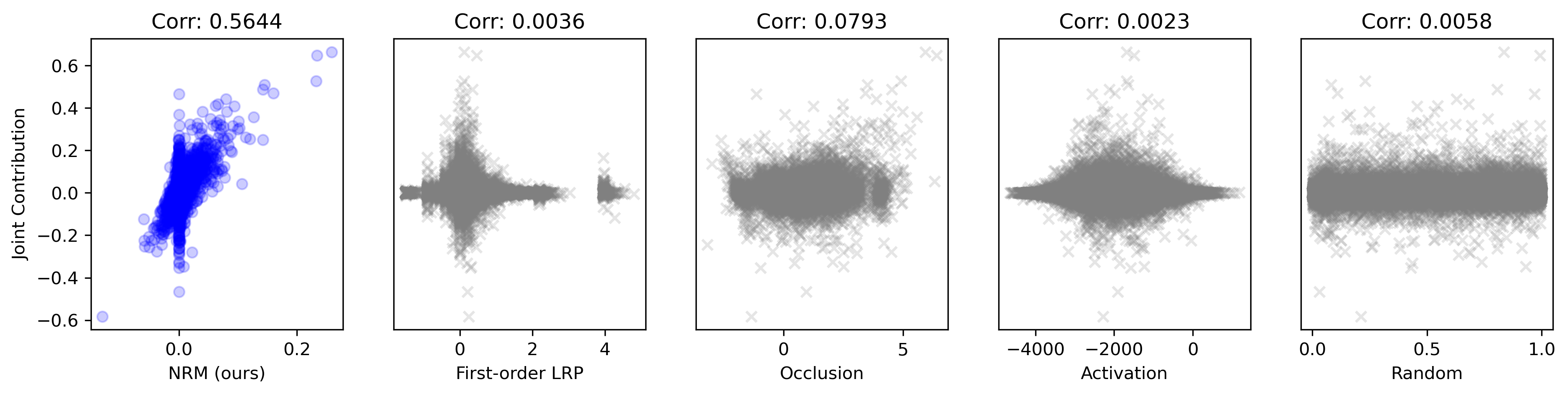}
         \caption{
         Channel-level.  Each point stands for a combination of one channel in layer 26 and one channel in layer 28.
         }
         \label{fig:corr_scatter_channel_2nd_order}
     \end{subfigure}
     \hfill
     \begin{subfigure}[b]{\textwidth}
         \centering
         \includegraphics[width=\linewidth]{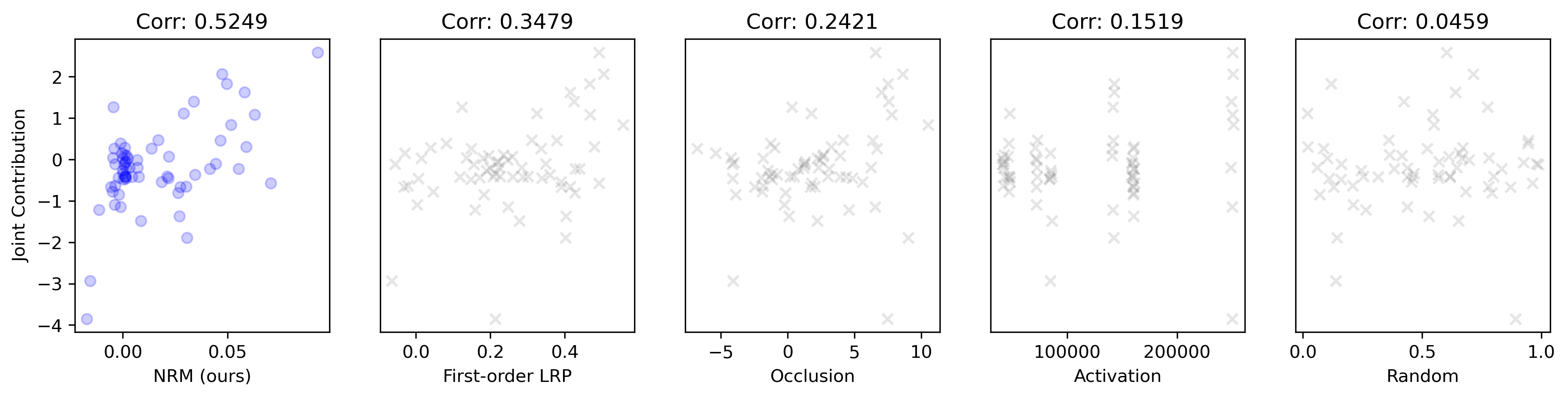}
         \caption{
         Cluster-level.  Each point stands for a combination of one cluster in intermediate embedding of block 1 and one in block 5.
         }
         \label{fig:corr_scatter_cluster_3rd_order}
     \end{subfigure}
    \caption{
        Correlation between joint relevance and joint contribution for VGG16 on a single test sample of ImageNet.      
    }
    \label{fig:corr_scatter_imagenet}
\end{figure*}

\section{Additional Results}
\label{sec:AdditionalResults}

Here we provide additional experimental results.

For the experiment with an MLP on MNIST in \Cref{sec:Experiment.MLP}, 
\Cref{fig:corr_scatter_mnist} visualizes the correlation for the second-order (top) and the third order (bottom) explanations, evaluated on the first input image of the test dataset. Strong correlation between the NRM joint relevance and the joint contribution is  observed.  In contrast, the correlation between the baseline explanation and the joint contribution is weak in general.

For the experiment with VGG16 on ImageNet in \Cref{sec:Experiment.VGG16}, 
\Cref{fig:corr_scatter_imagenet} similarly visualizes the correlation for the channel-level (top) and cluster-level (bottom) explanations, evaluated on the first input image of the test dataset. Similar tendency to the MNIST experiment is observed.

\section{NRM Definition for Recurrent Neural Networks (RNNs)}
\label{sec:DetailsLinear}

\begin{figure*}
    \centering
    \includegraphics[width=0.8\linewidth]{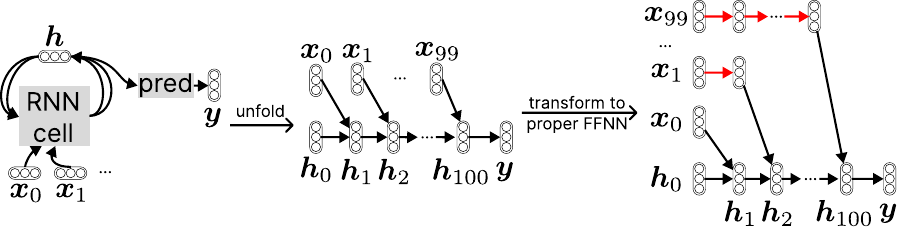}
    \caption{Conversion of RNN to proper FFNN. Left: The hidden state is interacted with a series of inputs, and the final hidden state is used as an input for the predictor, yielding the classification output. Middle: An unfolded representation of RNN, which is still not proper FFNN, because of the input neurons in intermediate layers. Right: A proper FFNN representation of RNN, obtained by duplicating  (red arrows) the input neurons up to the first layer.}
    \label{fig:rnn_proper_ffnn}
\end{figure*}

The RNN defined in Eqs.\eqref{eq:RNNForwardOne}--\eqref{eq:RNNForwardThree}
is not a proper FFNN (see \Cref{fig:rnn_proper_ffnn} left).
Therefore, we first convert it to a proper FFNN.
Following \ref{sec:GeneralNNAdaptation}, we first unfold the RNN cell (\Cref{fig:rnn_proper_ffnn} middle), and then duplicate input neurons so that all input neurons lie in the input layer,
giving a proper FNN representation (\Cref{fig:rnn_proper_ffnn} right).

The proper FFNN consists of $L = 101$ layers, each of which consists of the following set of neurons: 
\begin{align}
    \mcS^{(l)} &= 
    \begin{cases}
    \mcS^{(l)}_{\bfh_{l}}\cup 
    \left\{ \mcS^{(l)}_{\bfx_{l}}, \ldots, \mcS^{(l)}_{\bfx_{99}}\right\}, 
    & \mbox{ for } l = 0, \ldots, 99,\\
    \mcS^{(100)}_{\bfh_{100}},  
    & \mbox{ for } l = 100,
    \\
   \mcS^{(101)}_{\bfy},
   &  \mbox{ for } l = 101,
   \end{cases}
   \notag
\end{align}
as seen in \Cref{fig:rnn_proper_ffnn} (right).
For linear RNN and output cells,
\begin{align}
f^{\text{RNN}}(\bfh, \bfx) &= \bfW^h \bfh + \bfW^x \bfx, \notag \\
f^{\text{predictor}}(\bfh) &= \bfW^y\bfh,
\notag 
\end{align}
we define the unnormalized propagation matrix for $l=1, \ldots, 100$ as
\begin{align}
    \widetilde{\bfT}^{(l)}
    & = 
    \begin{pmatrix}
        \widetilde{\bfT}^{(l, \bfh_{l-1} \leftarrow \bfh_{l})} &   \bfzero \\
       \widetilde{\bfT}^{(l, \bfx_{l-1} \leftarrow \bfh_{l})} & \bfzero   \\
      \bfzero & \widetilde{\bfT}^{(l, \{\bfx_{l'}\}_{l' = l}^{99} \leftarrow \{\bfx_{l'}\}_{l' = l}^{99})} 
      \end{pmatrix},
        \label{eq:UnnormalizedPropagationMatrixRNN}
\end{align}
and for $l=101$ as
\begin{align}
    \widetilde{\bfT}^{(101)}
    & = 
        \widetilde{\bfT}^{(101, \bfh_{100} \leftarrow \bfy)} ,
        \label{eq:UnnormalizedPropagationMatrixRNN.Last}
\end{align}
where
\begin{align}
 \widetilde{T}_{n, n'}^{(l, \bfh_{l-1} \leftarrow \bfh_{l})}
 &= 
( h_{l-1})_{n} W^{h \uparrow}_{n, n'} + \varepsilon,
\notag\\
 \widetilde{T}_{n, n'}^{(l, \bfx_{l-1} \leftarrow \bfh_{l})} 
 &= 
 (x_{l-1})_{n} W^{x \uparrow}_{n, n'} + \varepsilon,
 \notag\\
 \widetilde{\bfT}^{(l, \{\bfx_{l'}\}_{l' = l}^{99} \leftarrow \{\bfx_{l'}\}_{l' = l}^{99})} 
 &= \bfI,
 \notag
\end{align}
and $\bfzero$ and $\bfI$ are the zero and identity matrices, respectively, with appropriate size.
Note that the unnormalized propagation matrix \eqref{eq:UnnormalizedPropagationMatrixRNN}
corresponds to Eq.~\eqref{eq:WholeUnnormalizedPropagationMatrix} with $\alpha = 1$ for all combinations of variables.

By normalizing Eqs.\eqref{eq:UnnormalizedPropagationMatrixRNN} and \eqref{eq:UnnormalizedPropagationMatrixRNN.Last},
we get the consecutive conditional relevance \eqref{eq:ConsecutiveConditionalRelevanceContruction} for each layer.  Defining the normalized output relevance \eqref{eq:OutputRelevanceContruction} by using the network output, we complete the relevance definition of all individual walks, and thus of all sets of walks.

\end{document}